\newcommand*{\eg}{e.g.\@\xspace}
\newcommand*{\ie}{i.e.\@\xspace}
\newcommand*{\wrt}{w.r.t.\@\xspace}
\newcommand{\E}{\mathbb{E}}
\newcommand{\Ps}{\mathcal{P}(\Omega)}
\DeclareMathOperator{\Tr}{Tr}
\DeclareMathOperator{\IS}{IS} %
\DeclareMathOperator{\FID}{FID} %
\newcommand{\RR}{\mu}
\newcommand{\coolname}{PRD}
\renewcommand{\P}{P}
\newcommand{\Q}{Q}
\renewcommand{\Ps}{\hat{\P}} %
\newcommand{\Qs}{\hat{\Q}} %
\newcommand{\mP}{\nu_P}
\newcommand{\mQ}{\nu_Q}
\newcommand{\supp}{\operatorname{supp}}
\newcommand{\iPQ}{S} %
\newcommand{\iPQc}{\overline{S}} %
\newcommand{\PR}{\operatorname{\coolname{}}(\Q, \P)}
\newcommand{\PRt}{\operatorname{\coolname{}}(\P, \Q)}
\newcommand{\PRa}{\widehat{\operatorname{\coolname{}}}(\Q, \P)}
\newtheorem{theorem}{Theorem}
\newtheorem{lemma}{Lemma}
\newtheorem{definition}{Definition}
\title{Assessing Generative Models via Precision and Recall}
\author{
  Mehdi S. M. Sajjadi\thanks{This work was done during an internship at Google Brain.\newline Correspondence: \href{http://msajjadi.com}{msajjadi.com}, \href{mailto:bachem@google.com}{bachem@google.com}, \href{mailto:lucic@google.com}{lucic@google.com}.} \\
  MPI for Intelligent Systems,\\
  Max Planck ETH Center \\ for Learning Systems \And
  Olivier Bachem \\ Google Brain \And
  Mario Lucic \\ Google Brain \AND
  Olivier Bousquet \\ Google Brain \And
  Sylvain Gelly\\ Google Brain
}
\begin{document}
\maketitle

\begin{abstract}
  Recent advances in generative modeling have led to an increased interest in
  the study of statistical divergences as means of model comparison. Commonly
  used evaluation methods, such as the Fr\'echet Inception Distance (FID), correlate well
  with the perceived quality of samples and are sensitive to mode dropping. However, these metrics are unable to distinguish between different failure cases since they only yield
  one-dimensional scores. We propose a novel definition of precision and recall
  for distributions which disentangles the divergence into two separate dimensions.
  The proposed notion is intuitive, retains desirable properties, and naturally
  leads to an efficient algorithm that can be used to evaluate generative
  models. We relate this notion to total variation as well as to recent evaluation
  metrics such as Inception Score and FID. To demonstrate the practical
  utility of the proposed approach we perform an empirical study on several
  variants of Generative Adversarial Networks and Variational Autoencoders.
  In an extensive set of experiments we show that the proposed metric is able to
  disentangle the quality of generated samples from the coverage of the target distribution.
\end{abstract}

\vspace{-1.0mm} %
\section{Introduction}
\vspace{-1.5mm} %
\label{sec:introduction}

  Deep generative models, such as Variational Autoencoders
  (VAE)~\cite{kingma2013auto} and Generative Adversarial Networks
  (GAN)~\cite{goodfellow2014generative}, have received a great deal of attention
  due to their ability to learn complex, high-dimensional distributions. One of the biggest impediments to future research is the lack of quantitative evaluation methods to accurately assess the
  quality of trained models. Without a proper evaluation metric researchers
  often need to visually inspect generated samples or resort to qualitative
  techniques which can be subjective. One of the
  main difficulties for quantitative assessment lies in the fact that the
  distribution is only specified implicitly -- one can learn to sample from a
  predefined distribution, but cannot evaluate the likelihood efficiently. In fact, even if likelihood computation
  were computationally tractable, it might be inadequate and misleading for high-dimensional problems~\cite{theis2015note}.

  As a result, surrogate metrics are often used to assess the quality of the trained models. Some proposed measures,
  such as Inception Score (IS)~\cite{salimans2016improved} and Fr\'echet
  Inception Distance (FID)~\cite{heusel2017gans}, have shown promising results in
  practice. In particular, FID has been shown to be robust to image corruption, it correlates well with the visual fidelity of the samples, and it can be computed on unlabeled data.

  However, all of the metrics commonly applied to evaluating generative models share a
  crucial weakness: Since they yield a one-dimensional score, they are unable to
  distinguish between different failure cases. For example, the generative models shown in Figure~\ref{fig:winner1} obtain similar FIDs but exhibit different sample characteristics: the model on the left trained on MNIST~\cite{mnist} produces realistic samples, but only generates a subset of the digits. On the other hand, the model on the right produces low-quality samples which appear to cover all digits. A similar effect can be observed on the CelebA~\cite{liu2015faceattributes} data set. In this work we argue that a single-value summary is not adequate to compare generative models.

  Motivated by this shortcoming, we present a novel approach which disentangles the divergence between distributions into two components: \emph{precision} and \emph{recall}. Given a reference distribution $\P$ and a learned distribution $\Q$,
  precision intuitively measures the quality of samples from $\Q$, while recall
  measures the proportion of $\P$ that is covered by $\Q$. Furthermore, we propose an elegant algorithm which can compute these quantities based on samples from $\P$ and $\Q$. In particular, using this approach we are able to quantify the degree of \emph{mode dropping} and \emph{mode inventing} based on samples from the true and the learned distributions.

  \textbf{Our contributions:}
  \textbf{(1)} We introduce a novel definition of precision and recall for
  distributions and prove that the notion is theoretically sound and has
  desirable properties,
  \textbf{(2)} we propose an efficient algorithm to compute these quantities,
  \textbf{(3)} we relate these notions to total variation, IS and FID,
  \textbf{(4)} we demonstrate that in practice one can quantify the degree of mode
  dropping and mode inventing on real world data sets (image and text data), and
  \textbf{(5)} we compare several types of generative models based on the proposed approach -- to our knowledge, this is the first metric that experimentally confirms the folklore that GANs often produce "sharper" images, but can suffer from mode collapse (high precision, low recall), while VAEs produce "blurry" images, but cover more modes of the distribution (low precision, high recall).

\vspace{-1.0mm} %
\section{Background and Related Work}
\vspace{-1.5mm} %
\label{sec:relatedwork}

  \begin{figure}[t]
  \centering
    \begin{center}
      \setlength{\tabcolsep}{2pt}
      \begin{tabular}{cc}
        \begin{tabular}{c}
          \includegraphics[trim={0 112 0 0},clip,width=0.29\linewidth]{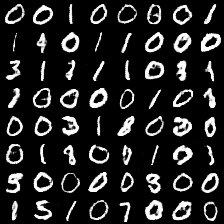} \\
          \includegraphics[trim={0 256 0 0},clip,width=0.29\linewidth]{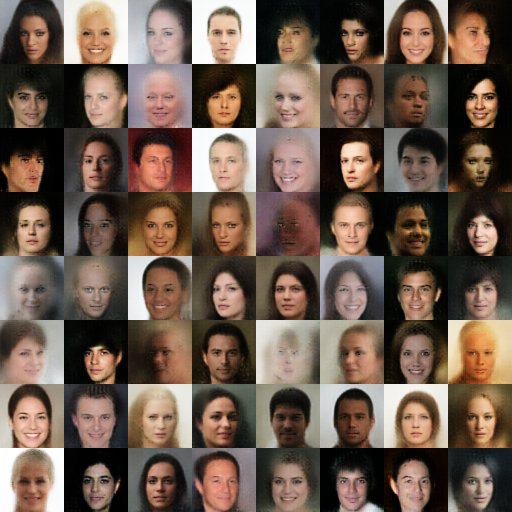}
        \end{tabular}
        \begin{tabular}{c}
          \vspace{-2mm}
          \includegraphics[width=0.35\linewidth]{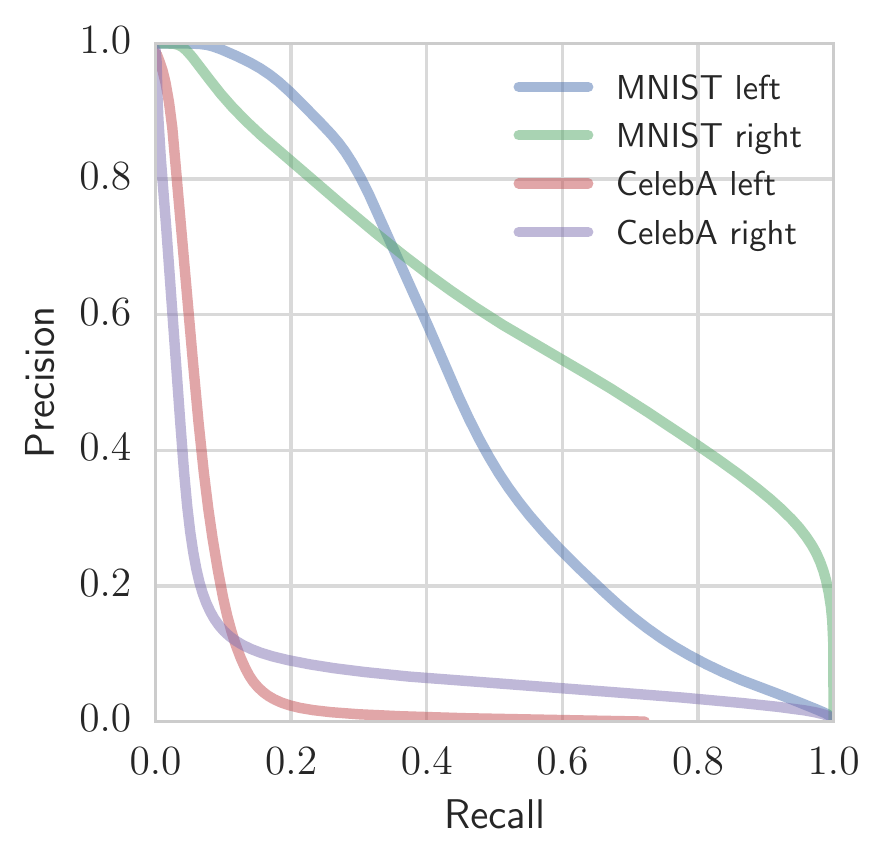}
        \end{tabular}
        \begin{tabular}{c}
          \includegraphics[trim={0 112 0 0},clip,width=0.29\linewidth]{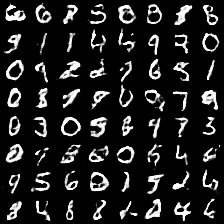} \\
          \includegraphics[trim={0 256 0 0},clip,width=0.29\linewidth]{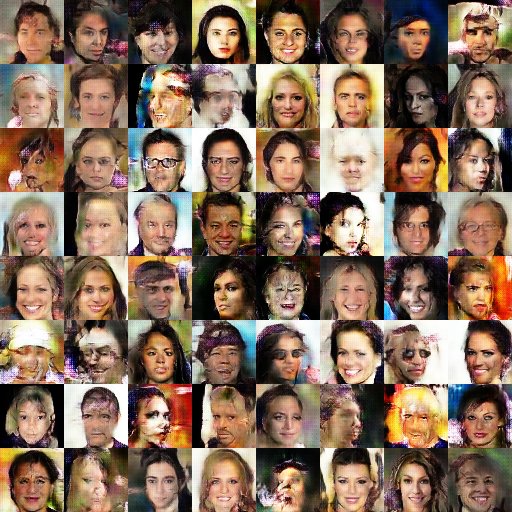}
        \end{tabular}
      \end{tabular}
    \end{center}
    \hfill
    \vspace{-3mm}
    \caption{Comparison of GANs trained on MNIST and CelebA. Although the models
    obtain a similar FID on each data set (32/29 for MNIST and 65/62 for CelebA),
    their samples look very different. For example, the model on the left
    produces reasonably looking faces on CelebA, but too many dark images. In contrast, the model on the right produces more artifacts, but more varied images. By the proposed metric (middle), the models on the left achieve  higher precision and lower recall than the models on the right, which suffices to successfully distinguishing between the
    failure cases.}
    \vspace{-3mm}
    \label{fig:winner1}
  \end{figure}

  The task of evaluating generative models is an active research area. Here we
  focus on recent work in the context of deep generative models for image and text
  data. Classic approaches relying on comparing log-likelihood have received some criticism due the fact that one can achieve high likelihood, but low image quality, and conversely, high-quality images but low likelihood~\cite{theis2015note}. While the likelihood can be approximated in some settings, kernel density estimation in high-dimensional spaces is extremely challenging~\cite{wu2016quantitative, theis2015note}.
  Other failure modes related to density estimation in high-dimensional spaces have been elaborated in~\cite{theis2015note, huszar2015not}. A recent review of popular approaches is presented in~\cite{borji2018pros}.

  The Inception Score (IS)~\cite{salimans2016improved} offers a way to quantitatively evaluate the
  quality of generated samples in the context of image data. Intuitively, the
  conditional label distribution $p(y|x)$ of samples containing meaningful
  objects should have low entropy, while the label distribution over the whole
  data set $p(y)$ should have high entropy. Formally, $\IS(G) = \exp(\E_{x \sim
  G}[d_{KL}(p(y|x), p(y)])$. The score is computed based on a classifier
  (Inception network trained on ImageNet). IS necessitates a labeled
  data set and has been found to be weak at providing guidance for model comparison~\cite{noteoninceptionscore}.

  The FID~\cite{heusel2017gans} provides an alternative approach which requires
  no labeled data. The samples are first embedded in some feature space (\eg, a
  specific layer of Inception network for images). Then, a continuous multivariate
  Gaussian is fit to the data and the distance computed as $\FID(x, g) = ||\mu_x -
  \mu_g||_2^2 + \Tr(\Sigma_x + \Sigma_g - 2(\Sigma_x\Sigma_g)^\frac12)$, where
  $\mu$ and $\Sigma$ denote the mean and covariance of the corresponding
  samples. FID is sensitive to both the addition of spurious modes as well as to
  mode dropping (see Figure~\ref{fig:modes_exp} and results in~\cite{lucic2017gans}). \cite{binkowski2018demystifying} recently introduced an unbiased alternative to FID, the \emph{Kernel Inception Distance}. While unbiased, it shares an extremely high Spearman rank-order correlation with FID~\cite{kurach2018gan}.

  Another approach is to train a classifier between the real and fake
  distributions and to use its accuracy on a test set as a proxy for the quality
  of the samples~\cite{lopez2016revisiting, im2018quantitatively}. This approach necessitates training of a classifier for each model which is seldom practical. Furthermore, the classifier might detect a single dimension
  where the true and generated samples differ (\eg, barely visible artifacts in
  generated images) and enjoy high accuracy, which runs the risk of assigning lower quality to a better model.

  To the best of our knowledge, all commonly used metrics for evaluating
  generative models are one-dimensional in that they only yield a single score
  or distance. A notion of precision and recall has previously been introduced
  in~\cite{lucic2017gans} where the authors compute the distance to the manifold
  of the true data and use it as a proxy for precision and recall on a synthetic data set. Unfortunately, it is not possible to compute this quantity for more
  complex data sets.

  \begin{figure}[t]
    \begin{minipage}[b]{0.6\linewidth}
      \includegraphics[width=\linewidth]{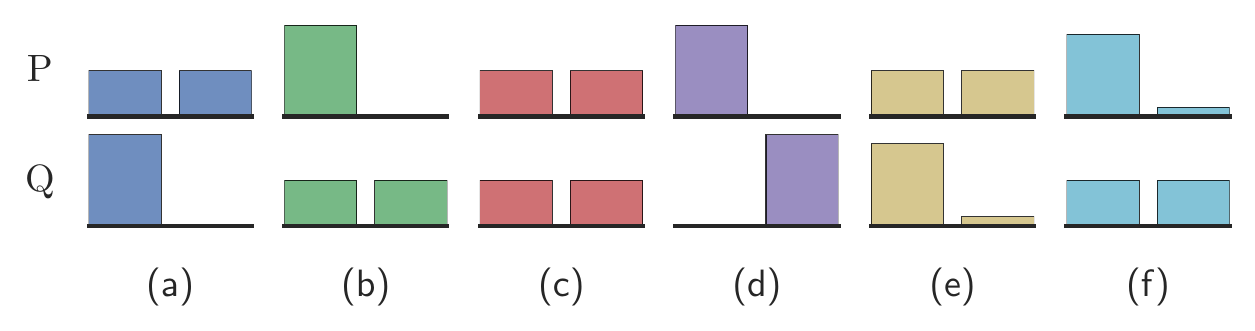}
      \vspace{-6mm}
      \caption{Intuitive examples of $\P$ and $\Q$\label{fig:examples}.}
      \includegraphics[width=\linewidth]{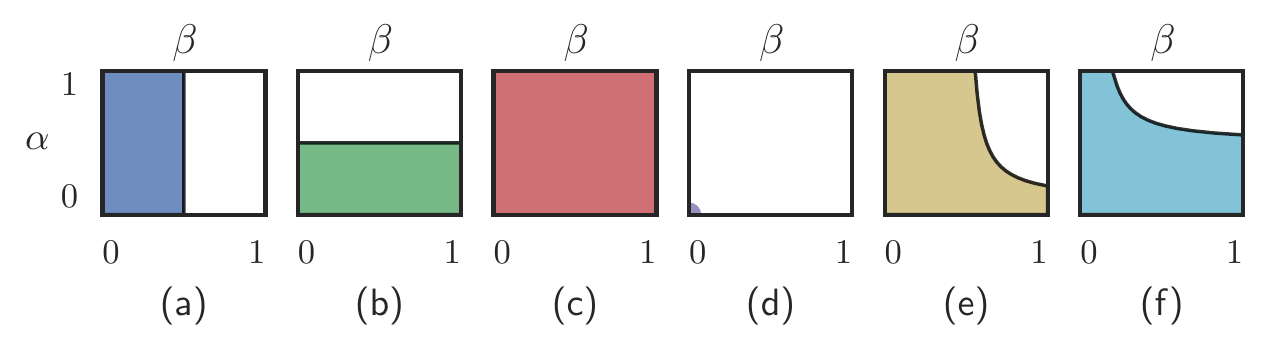}
      \vspace{-6mm}
      \caption{$\PR$ for the examples above\label{fig:examples-pr}.}
    \end{minipage}
    \hfill
    \begin{minipage}[b]{0.38\linewidth}
      \vspace{-2mm}
      \includegraphics[width=\linewidth]{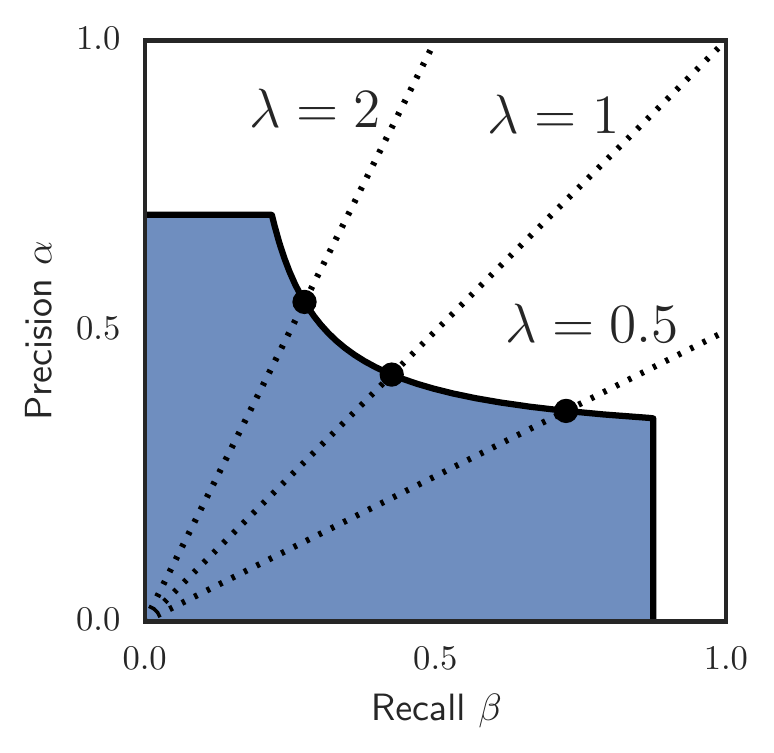}
      \vspace{-6mm}
      \caption{Illustration of the algorithm\label{fig:algorithm}.}
    \end{minipage}
  \end{figure}

\vspace{-1.0mm} %
\section{\coolname{}: Precision and Recall for Distributions}
\vspace{-1.5mm} %
\label{sec:method}

  In this section, we derive a novel notion of precision and recall to compare a
  distribution $\Q$ to a reference distribution $\P$. The key intuition is that
  \emph{precision} should measure how much of $\Q$ can be generated by a
  ``part'' of $\P$ while \emph{recall} should measure how much of $\P$ can be
  generated by a ``part'' of $\Q$. Figure~\ref{fig:examples} (a)-(d) show four
  toy examples for $\P$ and $\Q$ to visualize this idea:
  (a) If $\P$ is bimodal and $\Q$ only captures one of the modes, we should have
  perfect precision but only limited recall.
  (b) In the opposite case, we should have perfect recall but only limited
  precision.
  (c) If $\Q=\P$, we should have perfect precision and recall.
  (d) If the supports of $\P$ and $\Q$ are disjoint, we should have zero
  precision and recall.

\vspace{-1.0mm} %
\subsection{Derivation}
\vspace{-1.5mm} %
\label{sec:derivation}

  Let $\iPQ =\supp(\P) \cap\supp(\Q)$ be the (non-empty) intersection of the
  supports\footnote{For a distribution $\P$ defined
  on a finite state space $\Omega$, we define $\supp(\P)=\lbrace \omega \in
  \Omega \mid \P(\omega)>0\rbrace$.}
  of $\P$ and $\Q$. Then, $\P$ may be viewed as a two-component mixture
  where the first component $\P_\iPQ$ is a probability distribution on $\iPQ$
  and the second component $\P_{\iPQc}$ is defined on the complement of $\iPQ$.
  Similarly, $\Q$ may be rewritten as a mixture of $\Q_\iPQ$ and $\Q_{\iPQc}$.
  More formally, for some $\bar{\alpha}, \bar{\beta} \in(0,1]$, we define
  \begin{equation}
    \label{eqn:firstdecomp}
    \P = \bar{\beta} \P_\iPQ + (1-\bar{\beta}) \P_{\iPQc} \quad \text{and} \quad  \Q = \bar{\alpha} \Q_\iPQ + (1-\bar{\alpha}) \Q_{\iPQc}.
  \end{equation}
  This decomposition allows for a natural interpretation: $\P_{\iPQc}$ is the
  part of $\P$ that cannot be generated by $\Q$, so its mixture weight
  $1-\bar{\beta}$ may be viewed as a loss in recall. Similarly, $\Q_{\iPQc}$ is
  the part of $\Q$ that cannot be generated by $\P$, so $1-\bar{\alpha}$ may be
  regarded as a loss in precision. In the case where $\P_\iPQ=\Q_{\iPQ}$, \ie,
  the distributions $\P$ and $\Q$ agree on $\iPQ$ up to scaling,
  $\bar{\alpha}$ and $\bar{\beta}$ provide us with a simple two-number precision
  and recall summary satisfying the examples in Figure~\ref{fig:examples}
  (a)-(d).

  If $\P_\iPQ\neq\Q_{\iPQ}$, we are faced with a conundrum: Should the differences
  in $\P_\iPQ$ and $\Q_{\iPQ}$ be attributed to losses in precision or recall?
  Is $\Q_{\iPQ}$ inadequately ``covering'' $\P_\iPQ$ or is it generating
  ``unnecessary'' noise? Inspired by PR curves for binary classification, we
  propose to resolve this predicament by providing a trade-off between precision
  and recall instead of a two-number summary for any two distributions $\P$ and
  $\Q$. To parametrize this trade-off, we consider a distribution $\RR$ on
  $\iPQ$ that signifies a ``true'' common component of $\P_\iPQ$ and $\Q_{\iPQ}$
  and similarly to \eqref{eqn:firstdecomp}, we decompose both $\P_\iPQ$ and
  $\Q_{\iPQ}$ as

  \begin{equation}
    \label{eqn:seconddecomp}
    \P_\iPQ = \beta' \RR + (1-\beta') \P_{\RR} \quad \text{and} \quad  \Q_\iPQ = \alpha' \RR + (1-\alpha') \Q_{\RR}.
  \end{equation}

  The distribution $\P_\iPQ$ is viewed as a two-component mixture where the
  first component is $\RR$ and the second component $\P_{\RR}$ signifies the
  part of $\P_\iPQ$ that is ``missed'' by $\Q_{\iPQ}$ and should thus be
  considered a recall loss. Similarly, $\Q_{\iPQ}$ is decomposed into $\RR$ and
  the part $\Q_{\RR}$ that signifies noise and should thus be considered a
  precision loss. As $\RR$ is varied, this leads to a trade-off between
  precision and recall.

  It should be noted that unlike PR curves for binary classification where different thresholds lead to different classifiers, trade-offs between precision and recall here do not constitute different models or distributions -- the proposed \coolname{} curves only serve as a description of the characteristics of the model with respect to the target distribution.

\vspace{-1.0mm} %
\subsection{Formal definition}
\vspace{-1.5mm} %
\label{sec:formaldefinition}

  For simplicity, we consider distributions $\P$ and $\Q$ that are defined on a
  finite state space, though the notion of precision and recall can be extended to arbitrary distributions. By combining \eqref{eqn:firstdecomp}
  and~\eqref{eqn:seconddecomp}, we obtain the following formal definition of
  precision and recall.

  \begin{definition}
  \label{def:precisionrecall}
    For $\alpha, \beta \in (0, 1]$, the probability distribution $\Q$ has
    precision $\alpha$ at recall $\beta$ \wrt $\P$ if there exist distributions
    $\RR$, $\mP$ and $\mQ$ such that
    \begin{equation}
    \label{eqn:prdefinition}
      \P = \beta \RR + (1-\beta) \mP \quad \text{and} \quad  \Q = \alpha \RR + (1-\alpha) \mQ.
    \end{equation}
  \end{definition}

  The component $\mP$ denotes the part of $\P$ that is ``missed'' by $\Q$ and
  encompasses both $\P_{\iPQc}$ in \eqref{eqn:firstdecomp} and $\P_\RR$ in
  \eqref{eqn:seconddecomp}. Similarly, $\mQ$ denotes the noise part of $\Q$ and
  includes both $\Q_{\iPQc}$ in \eqref{eqn:firstdecomp} and $\Q_\RR$ in
  \eqref{eqn:seconddecomp}.

  \begin{definition}
  \label{def:precisionrecallset}
    The set of attainable pairs of precision and recall of a distribution $\Q$
    \wrt a distribution $\P$ is denoted by $\PR$ and it consists of all
    $(\alpha, \beta)$ satisfying Definition~\ref{def:precisionrecall} and the
    pair $(0,0)$.
  \end{definition}

  The set $\PR$ characterizes the above-mentioned trade-off between precision
  and recall and can be visualized similarly to PR curves in binary classification:
  Figure~\ref{fig:examples-pr} (a)-(d) show the set $\PR$ on a 2D-plot for the
  examples (a)-(d) in Figure~\ref{fig:examples}. Note how the plot distinguishes
  between (a) and (b): Any symmetric evaluation method (such as FID) assigns
  these cases the same score although they are highly different. The
  interpretation of the set $\PR$ is further aided by the following set of basic
  properties which we prove in Section~\ref{sec:app:proof:properties} in the
  appendix.

  \begin{theorem}
  \label{thm:properties}
    Let $\P$ and $\Q$ be probability distributions defined on a finite state
    space $\Omega$. The set $\PR$ satisfies the following properties:
    \begin{enumerate}[(i),itemsep=-1mm]
      \item $(1,1)\in\PR \;\;\Leftrightarrow\;\; \Q = \P$
      \hfill(equality)
      \label{prop:equality}
      \item $\PR = \lbrace(0,0)\rbrace \;\;\Leftrightarrow\;\; \supp(\Q) \cap\supp(\P) = \emptyset$
      \hfill(disjoint supports)
      \label{prop:disjoint}
      \item  $\Q(\supp(\P)) = \bar{\alpha}=\max_{(\alpha,\beta)\in\PR}\alpha$
      \hfill (max precision)
      \label{prop:maxprecision}
      \item $\P(\supp(\Q)) = \bar{\beta}=\max_{(\alpha,\beta)\in\PR}\beta$
      \hfill (max recall)
      \label{prop:maxrecall}
      \item $(\alpha',\beta')\in\PR$ \;if\; $\alpha'\in(0,\alpha]$, $\beta'\in(0,\beta]$, $(\alpha,\beta)\in\PR$
      \hfill(monotonicity)
      \label{prop:monotonicity}
      \item $(\alpha,\beta)\in\PR \;\;\Leftrightarrow\;\; (\beta,\alpha)\in \PRt$
      \hfill(duality)
      \label{prop:duality}
    \end{enumerate}
  \end{theorem}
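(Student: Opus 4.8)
My plan is to prove the six items in an order that lets later parts reuse earlier ones. Duality, property~(vi), is essentially free: Definition~\ref{def:precisionrecall} is invariant under simultaneously interchanging $(\P,\Q)$ and $(\beta,\alpha)$ while keeping the same witness $\RR$, and the artificially appended point $(0,0)$ is fixed by this interchange, so $(\alpha,\beta)\in\PR$ iff $(\beta,\alpha)\in\PRt$. I will invoke this at the end to obtain property~(iv) from property~(iii).

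Next, monotonicity, property~(v). Given a witness $(\RR,\nu_P,\nu_Q)$ for $(\alpha,\beta)$ and targets $\alpha'\le\alpha$, $\beta'\le\beta$ in $(0,1]$, I keep the same $\RR$ and merely enlarge the residuals: $\P-\beta'\RR=(\beta-\beta')\RR+(1-\beta)\nu_P$ is nonnegative with total mass $1-\beta'$, so for $\beta'<1$ its renormalization serves as the new $\nu_P$ at level $\beta'$, while $\beta'=1$ forces $\beta=1$ and $\P=\RR$, again a valid decomposition; the $\Q$ side is identical. This also yields property~(i): if $\Q=\P$ take $\RR=\P$ and $\alpha=\beta=1$; conversely, $\alpha=\beta=1$ in \eqref{eqn:prdefinition} collapses both mixtures to $\P=\RR=\Q$.

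The substantive part is property~(iii). For the upper bound, any $(\alpha,\beta)\in\PR$ with $\alpha>0$ also has $\beta>0$, and $\P=\beta\RR+(1-\beta)\nu_P$ with $\beta>0$ forces $\supp(\RR)\subseteq\supp(\P)$; hence $\Q(\supp(\P))\ge\alpha\,\RR(\supp(\P))=\alpha$. For attainability, let $S=\supp(\P)\cap\supp(\Q)$, note $\bar\alpha=\Q(S)=\Q(\supp(\P))$, and take $\RR$ to be $\Q$ restricted to $S$ and renormalized; then $\Q-\bar\alpha\RR$ is exactly the part of $\Q$ outside $S$, which gives the $\Q$-side decomposition at level $\bar\alpha$ (the case $\bar\alpha=1$ being the trivial decomposition). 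It remains to find \emph{some} $\beta>0$ with $\P-\beta\RR\ge0$ pointwise, and here the finiteness of $\Omega$ is what makes the construction go through: $\beta=\min\{1,\ \bar\alpha\min_{\omega\in S}\P(\omega)/\Q(\omega)\}$ is strictly positive because it is a minimum over finitely many positive ratios, and then $\nu_P\propto\P-\beta\RR$ completes the decomposition. I expect this attainability step -- together with tracking its degenerate sub-cases ($\beta=1$, or $\bar\alpha=1$) -- to be the only real bookkeeping obstacle; everything else reduces to symmetry or a one-line support inclusion. Property~(iv) then follows by applying property~(iii) with $\P$ and $\Q$ swapped and translating through property~(vi).

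Finally, property~(ii). If $\supp(\P)\cap\supp(\Q)=\emptyset$, any nonzero $(\alpha,\beta)\in\PR$ would give $\supp(\RR)\subseteq\supp(\P)\cap\supp(\Q)=\emptyset$, which is impossible, so $\PR=\{(0,0)\}$. Conversely, if some $\omega_0$ lies in both supports, then $\RR=\delta_{\omega_0}$ witnesses $(\Q(\omega_0),\P(\omega_0))\in\PR$ with both coordinates positive (the cases $\P(\omega_0)=1$ or $\Q(\omega_0)=1$ again being the trivial decomposition), so $\PR\neq\{(0,0)\}$; alternatively, this direction is immediate from property~(iii), since $S\neq\emptyset$ gives $\bar\alpha=\Q(S)>0$.
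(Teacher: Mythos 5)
Your proof is correct and follows essentially the same route as the paper's: support inclusion for the upper bound on precision, the renormalized restriction of $\Q$ to $\supp(\P)\cap\supp(\Q)$ with $\beta=\min_{\omega}\P(\omega)\Q(S)/\Q(\omega)$ for attainability, point masses for the disjoint-support converse, and symmetry of the definition for duality and for deriving max recall from max precision. The only differences are presentational: the paper factors the ``common witness $\RR$ with pointwise inequalities'' criterion into a standalone lemma and routes every property through it, whereas you construct the residual distributions $\nu_P,\nu_Q$ inline each time (and cap $\beta$ at $1$ rather than arguing $\beta\le\P(S)\le 1$ directly); both are sound.
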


  Property~\ref{prop:equality} in combination with
  Property~\ref{prop:monotonicity} guarantees that $\Q=\P$ if the set $\PR$
  contains the interior of the unit square, see case (c) in
  Figures~\ref{fig:examples} and~\ref{fig:examples-pr}. Similarly,
  Property~\ref{prop:disjoint} assures that whenever there is no overlap between
  $\P$ and $\Q$, $\PR$ only contains the origin, see case (d) of
  Figures~\ref{fig:examples} and~\ref{fig:examples-pr}.
  Properties~\ref{prop:maxprecision} and~\ref{prop:maxrecall} provide a
  connection to the decomposition in~\eqref{eqn:firstdecomp} and allow an
  analysis of the cases (a) and (b) in Figures~\ref{fig:examples}
  and~\ref{fig:examples-pr}: As expected, $\Q$ in (a) achieves a maximum
  precision of 1 but only a maximum recall of 0.5 while in (b), maximum recall
  is 1 but maximum precision is 0.5. Note that the quantities $\bar{\alpha}$ and $\bar{\beta}$ here are by construction the same as in~\eqref{eqn:firstdecomp}. Finally, Property~\ref{prop:duality}
  provides a natural interpretation of precision and recall: The precision of
  $\Q$ \wrt $\P$ is equal to the recall of $\P$ \wrt $\Q$ and \emph{vice versa}.

  Clearly, not all cases are as simple as the examples (a)-(d) in
  Figures~\ref{fig:examples} and~\ref{fig:examples-pr}, in particular if $\P$
  and $\Q$ are different on the intersection $\iPQ$ of their support. The
  examples (e) and (f) in Figure~\ref{fig:examples} and the resulting sets $\PR$
  in Figure~\ref{fig:examples-pr} illustrate the importance of the trade-off
  between precision and recall as well as the utility of the set $\PR$. In both
  cases, $\P$ and $\Q$ have the same support while $\Q$ has high precision and
  low recall in case (e) and low precision and high recall in case (f). This is
  clearly captured by the sets $\PR$. Intuitively, the examples (e) and (f) may
  be viewed as noisy versions of the cases (a) and (b) in
  Figure~\ref{fig:examples}.

\vspace{-1.0mm} %
\subsection{Algorithm}
\vspace{-1.5mm} %
\label{sec:algorithm}

  Computing the set $\PR$ based on Definitions~\ref{def:precisionrecall}
  and~\ref{def:precisionrecallset} is non-trivial as one has to check whether
  there exist suitable distributions $\RR$, $\mP$ and $\mQ$ for all possible
  values of $\alpha$ and $\beta$. We introduce an equivalent definition of $\PR$
  in Theorem~\ref{thm:redefinition} that does not depend on the distributions
  $\RR$, $\mP$ and $\mQ$ and that leads to an elegant algorithm to compute
  practical \coolname{} curves.

  \begin{theorem}[]
  \label{thm:redefinition}
    Let $\P$ and $\Q$ be two probability distributions defined on a finite state
    space $\Omega$. For $\lambda > 0$ define the functions
    \begin{equation}
    \label{eqn:alphabetalambda}
      \alpha(\lambda) = \sum_{\omega\in\Omega}\min\left(\lambda\P(\omega), \Q(\omega)\right)\quad\text{and}\quad \beta(\lambda) = \sum_{\omega\in\Omega}\min\left(\P(\omega), \frac{\Q(\omega)}\lambda\right).
    \end{equation}
    Then, it holds that
    \begin{equation*}
      \PR=\left\lbrace\left(\theta\alpha(\lambda), \theta\beta(\lambda)\right)\mid\lambda\in(0,\infty), \theta\in[0,1]\right\rbrace.
    \end{equation*}
  \end{theorem}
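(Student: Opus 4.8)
The plan is to prove the two inclusions separately, each by an explicit construction. Write $R = \{(\theta\alpha(\lambda),\theta\beta(\lambda)) \mid \lambda\in(0,\infty),\ \theta\in[0,1]\}$ for the right-hand side. For the inclusion $R \subseteq \PR$, fix $\lambda>0$ and set $\gamma(\omega)=\min(\lambda\P(\omega),\Q(\omega))$. The key observation is that $\sum_\omega \gamma(\omega) = \alpha(\lambda)$, and moreover $\gamma(\omega)/\lambda = \min(\P(\omega),\Q(\omega)/\lambda)$, so $\sum_\omega \gamma(\omega)/\lambda = \beta(\lambda)$. Hence, provided $\alpha(\lambda)>0$, one can define the probability distribution $\RR(\omega) = \gamma(\omega)/\alpha(\lambda)$, and check that $\Q = \alpha(\lambda)\RR + (1-\alpha(\lambda))\mQ$ with $\mQ$ the (normalized) nonnegative remainder $\Q-\gamma$, and similarly $\P = \beta(\lambda)\RR + (1-\beta(\lambda))\mP$ with $\mP$ the normalized remainder $\P - \gamma/\lambda \ge 0$; one must also handle the edge cases $\alpha(\lambda)=1$ or $\beta(\lambda)=1$ where the ``missed''/``noise'' component vanishes, which can be absorbed by letting $\mP,\mQ$ be arbitrary. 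This shows $(\alpha(\lambda),\beta(\lambda))\in\PR$, and then $(\theta\alpha(\lambda),\theta\beta(\lambda))\in\PR$ follows from monotonicity (Property~\ref{prop:monotonicity}) together with the fact that $(0,0)\in\PR$ by definition, covering $\theta=0$ and the case $\alpha(\lambda)=0$.

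For the reverse inclusion $\PR\subseteq R$, take $(\alpha,\beta)\in\PR$ with $\alpha,\beta>0$ and witnesses $\RR,\mP,\mQ$ as in~\eqref{eqn:prdefinition}. From $\P = \beta\RR + (1-\beta)\mP \ge \beta\RR$ and $\Q = \alpha\RR + (1-\alpha)\mQ \ge \alpha\RR$ we get, pointwise, $\RR(\omega) \le \P(\omega)/\beta$ and $\RR(\omega)\le \Q(\omega)/\alpha$, hence $\RR(\omega) \le \min(\P(\omega)/\beta,\ \Q(\omega)/\alpha)$. Choose $\lambda = \alpha/\beta$. Then $\beta\RR(\omega) \le \min(\beta\P(\omega)/\beta,\ \beta\Q(\omega)/\alpha) = \min(\P(\omega), \Q(\omega)/\lambda)$, and summing over $\omega$ gives $\beta = \beta\sum_\omega\RR(\omega) \le \beta(\lambda)$; symmetrically $\alpha = \alpha\sum_\omega\RR(\omega) \le \alpha(\lambda)$, where we used $\alpha\RR(\omega)\le\min(\lambda\P(\omega),\Q(\omega))$. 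Now I claim the ratio is preserved: $\alpha/\beta = \lambda = \alpha(\lambda)/\beta(\lambda)$ — indeed $\alpha(\lambda) = \sum_\omega\min(\lambda\P(\omega),\Q(\omega)) = \lambda\sum_\omega\min(\P(\omega),\Q(\omega)/\lambda) = \lambda\beta(\lambda)$. Therefore $(\alpha,\beta)$ lies on the open segment from the origin to $(\alpha(\lambda),\beta(\lambda))$, i.e. $(\alpha,\beta) = \theta(\alpha(\lambda),\beta(\lambda))$ with $\theta = \alpha/\alpha(\lambda) = \beta/\beta(\lambda) \in (0,1]$, so $(\alpha,\beta)\in R$. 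The point $(0,0)$ is in $R$ by taking $\theta=0$.

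I expect the main obstacle to be the careful bookkeeping of the degenerate cases rather than the core argument: in the forward direction when $\alpha(\lambda)=0$ (disjoint or nearly-disjoint supports make $\RR$ undefined) or when one of $\alpha(\lambda),\beta(\lambda)$ equals $1$ so that the corresponding residual distribution is the zero measure and must be replaced by an arbitrary distribution; and in the backward direction checking that the identity $\alpha(\lambda) = \lambda\beta(\lambda)$ holds for all $\lambda>0$ (a one-line manipulation pulling $\lambda$ out of the $\min$) which is exactly what forces the PR set to be a union of segments through the origin. A secondary point worth stating explicitly is why it suffices to use $\lambda=\alpha/\beta$: one should remark that $\lambda\mapsto(\alpha(\lambda),\beta(\lambda))$ traces out the ``upper boundary'' of $\PR$ and that every interior point is dominated by a boundary point with the same ratio, which is precisely the monotonicity structure guaranteed by Property~\ref{prop:monotonicity}. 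No continuity or compactness argument is needed since everything is a finite sum.
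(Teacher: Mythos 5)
Your proposal is correct and takes essentially the same route as the paper's proof: the same key identity $\alpha(\lambda)=\lambda\beta(\lambda)$, the same choice $\lambda=\alpha/\beta$ for the inclusion $\PR\subseteq R$, and the same witness $\RR$ proportional to $\omega\mapsto\min\left(\lambda\P(\omega),\Q(\omega)\right)$ for the reverse inclusion. The only cosmetic differences are that you verify Definition~\ref{def:precisionrecall} directly with explicit residual distributions (in effect re-deriving Lemma~\ref{lem:inequality} inline) and invoke Property~\ref{prop:monotonicity} to handle $\theta<1$, whereas the paper simply scales the inequalities of Lemma~\ref{lem:inequality} by $\theta$.
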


  We prove the theorem in Section~\ref{sec:app:proof:redefinition} in the
  appendix. The key idea of Theorem~\ref{thm:redefinition} is illustrated in
  Figure~\ref{fig:algorithm}: The set of $\PR$ may be viewed as a union of
  segments of the lines $\alpha=\lambda\beta$ over all $\lambda\in(0,\infty)$.
  Each segment starts at the origin $(0,0)$ and ends at the maximal achievable
  value $\left(\alpha(\lambda), \beta(\lambda)\right)$. This provides a
  surprisingly simple algorithm to compute $\PR$ in practice: Simply compute
  pairs of $\alpha(\lambda)$ and $\beta(\lambda)$ as defined in
  \eqref{eqn:alphabetalambda} for an equiangular grid of values of $\lambda$.
  For a given angular resolution $m\in\mathbb{N}$, we compute
  \begin{equation*}
    \PRa = \left\lbrace \left(\alpha(\lambda), \beta(\lambda)\right)  \mid \lambda \in \Lambda\right\rbrace \quad\text{where}\quad \Lambda = \left\lbrace\tan\left(\tfrac{i}{m+1}\tfrac{\pi}{2}\right)\mid i =1, 2,\dots,m\right\rbrace.
  \end{equation*}
  To compare different distributions $\Q_i$, one may simply plot their
  respective \coolname{} curves $\widehat{\operatorname{\coolname{}}}(\Q_i, \P)$, while an approximation of the full sets $\operatorname{\coolname{}}(\Q_i, \P)$ may be computed by
  interpolation between $\widehat{\operatorname{\coolname{}}}(\Q_i, \P)$ and the origin. An implementation of the algorithm is available at \href{https://github.com/msmsajjadi/precision-recall-distributions}{https://github.com/msmsajjadi/precision-recall-distributions}.

\vspace{-1.0mm} %
\subsection{Connection to total variation distance}
\vspace{-1.5mm} %

  Theorem~\ref{thm:redefinition} provides a natural interpretation of
  the proposed approach. For $\lambda=1$, we have
  \begin{equation*}
    \alpha(1)
    =\beta(1)
    = \sum_{\omega\in\Omega}\min\left(\P(\omega), \Q(\omega)\right)
    = \sum_{\omega\in\Omega}\left[\P(\omega) - \left(\P(\omega) - \Q(\omega)\right)^+\right]
    = 1 - \delta(\P,\Q)
  \end{equation*}
  where $\delta(\P, \Q)$ denotes the total variation distance between $\P$ and
  $\Q$. As such, our notion of precision and recall may be viewed as a
  generalization of total variation distance.

\vspace{-1.0mm} %
\section{Application to Deep Generative Models}
\vspace{-1.5mm} %
\label{sec:application}

  In this section, we show that the algorithm introduced in
  Section~\ref{sec:algorithm} can be readily applied to evaluate precision
  and recall of deep generative models. In practice, access to $\P$ and $\Q$ is
  given via samples $\Ps\sim\P$ and $\Qs\sim\Q$. Given that both $\P$
  and $\Q$ are continuous distributions, the probability of generating a point
  sampled from $\Q$ is $0$. Furthermore, there is strong empirical evidence that comparing samples in image space runs the risk of assigning higher quality to a worse model~\cite{lopez2016revisiting,salimans2016improved,theis2015note}. A common remedy is to apply a pre-trained classifier trained on natural images and to compare $\Ps$ and $\Qs$ at a feature level. Intuitively, in this feature space the samples should be compared based on statistical regularities in the images rather than random artifacts resulting from the generative process~\cite{lopez2016revisiting,odena2016deconvolution}.

  Following this line of work, we first use a pre-trained Inception network to
  embed the samples (i.e. using the \emph{Pool3} layer~\cite{heusel2017gans}). We then
  cluster the union of $\Ps$ and $\Qs$ in this feature space using mini-batch k-means with $k=20$~\cite{sculley2010web}. Intuitively, we reduce the problem to a one dimensional problem where the histogram over the cluster assignments can be meaningfully compared. Hence, failing to produce samples from a cluster with many samples from the true distribution will hurt recall, and producing samples in clusters without many real samples will hurt precision. As the clustering algorithm is randomized, we run the procedure several times and average over the \coolname{} curves. We note that such a clustering is meaningful as shown in Figure~\ref{fig:cluster_vis} in the appendix and that it can be efficiently scaled to very large sample sizes~\cite{bachem2016approximate,bachem2016fast}.

  We stress that from the point of view of the proposed algorithm, only a meaningful embedding is required. As such, the algorithm can be applied to various data modalities. In particular, we show in Section~\ref{sec:mode_exp} that besides image data the algorithm can be applied to a text generation task.

\vspace{-1.0mm} %
\subsection{Adding and dropping modes from the target distribution}
\vspace{-1.5mm} %
\label{sec:mode_exp}

  \begin{figure}[t]
    \centering
    \includegraphics[width=0.378\linewidth]{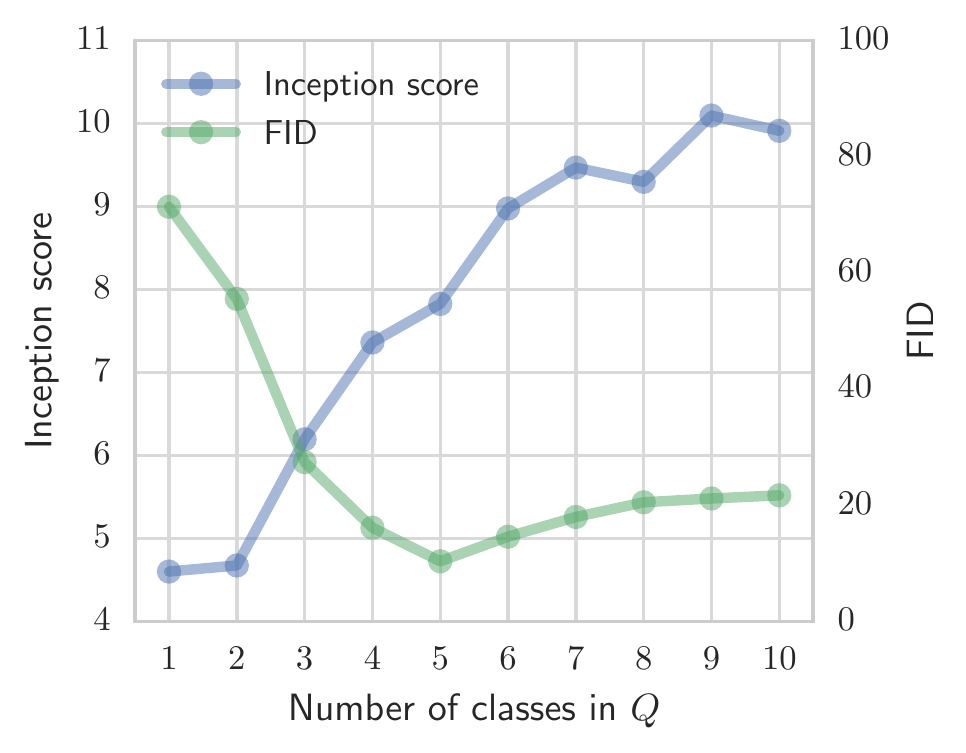}
    \includegraphics[width=0.3\linewidth]{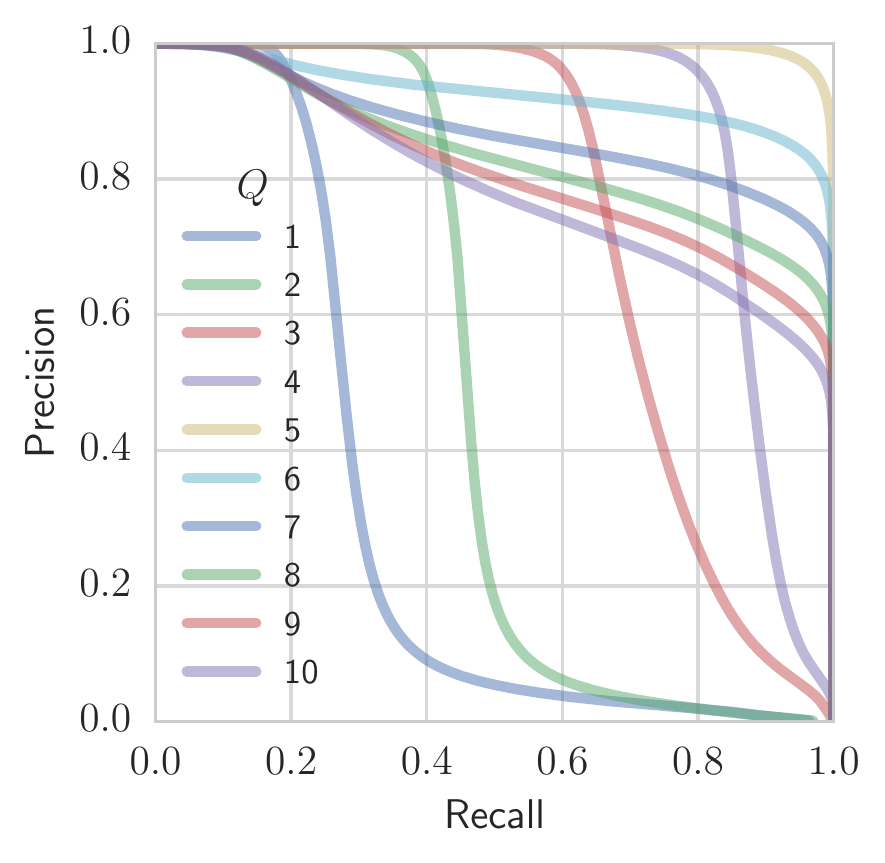}
    \includegraphics[width=0.3\linewidth]{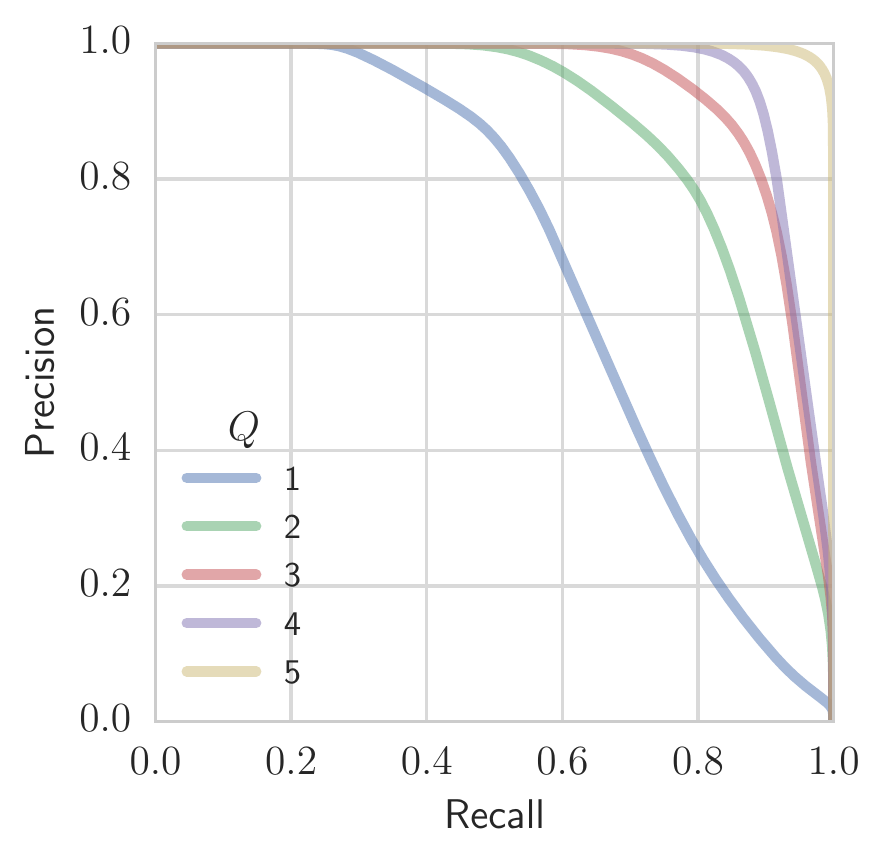}
    \caption{Left: IS and FID as we remove and add classes of CIFAR-10. IS generally only
    increases, while FID is sensitive to both the addition and removal of classes. However, it cannot distinguish between the two failure cases of inventing or dropping modes. Middle: Resulting \coolname{} curves for the same
    experiment. As expected, adding modes leads to a loss in precision ($\Q_6$--$\Q_{10}$), while dropping modes leads to a loss in recall ($\Q_1$--$\Q_4$). As an example consider $\Q_4$ and $\Q_6$ which have similar FID, but strikingly different \coolname{} curves. The same behavior can be observed for the task of text generation, as displayed on the plot on the right.
    For this experiment, we set $\P$ to contain samples from all classes so the \coolname{} curves demonstrate the increase in recall as we increase the number of classes in $\Q$.}
    \vspace{-4mm}
    \label{fig:modes_exp}
  \end{figure}

  Mode collapse or mode dropping is a major challenge in
  GANs~\cite{goodfellow2014generative, salimans2016improved}. Due to the
  symmetry of commonly used metrics with respect to precision and recall, the
  only way to assess whether the model is producing low-quality images or
  dropping modes is by visual inspection. In stark contrast, the proposed metric can quantitatively disentangle these effects which we empirically demonstrate.

  We consider three data sets commonly used in the GAN literature:
  MNIST~\cite{mnist}, Fashion-MNIST~\cite{fashionmnist}, and
  CIFAR-10~\cite{cifar10}. These data sets are labeled and consist of 10
  balanced classes. To show the sensitivity of the proposed measure to mode
  dropping and mode inventing, we first fix $\Ps$ to contain samples from the first 5 classes in the respective test set. Then, for a fixed $i=1,\dots,10$, we
  generate a set $\Qs_i$, which consists of samples from the first $i$ classes
  from the training set. As $i$ increases, $\Qs_i$ covers an increasing number of
  classes from $\Ps$ which should result in higher recall. As we
  increase $i$ beyond 5, $\Qs_i$ includes samples from an increasing number of
  classes that are not present in $\Ps$ which should result in a loss in precision, but not in recall as the other classes are already covered. Finally, the set $\Qs_5$ covers the same classes as $\Ps$, so it should have high precision and high recall.

  Figure~\ref{fig:modes_exp} (left) shows the IS and FID for the CIFAR-10
  data set (results on the other data sets are shown in
  Figure~\ref{fig:modes_exp_appendix} in the appendix). Since the IS is not computed \wrt a reference
  distribution, it is invariant to the choice of $\Ps$, so as we add classes to
  $\Qs_i$, the IS increases. The FID decreases as we
  add more classes until $\Qs_5$ before it starts to increase as we add spurious
  modes. Critically, FID fails to distinguish the cases of mode dropping and mode inventing: $\Qs_4$ and $\Qs_6$ share similar FIDs. In contrast, Figure~\ref{fig:modes_exp} (middle) shows our \coolname{} curves as we vary the number of classes in $\Qs_i$. Adding correct modes leads to an increase in recall, while adding fake modes leads to a loss of precision.

  We also apply the proposed approach on text data as shown in Figure~\ref{fig:modes_exp} (right). In particular, we use the {MultiNLI}
  corpus of crowd-sourced sentence pairs annotated
  with topic and textual entailment
  information~\cite{williams2017broad}. After discarding the entailment label, we
  collect all unique sentences for the same topic.
  Following~\cite{cifka2018eval}, we embed these sentences using a BiLSTM with
  2048 cells in each direction and max pooling, leading to a 4096-dimensional
  embedding~\cite{conneau2017supervised}. We consider 5 classes from this data set and fix $\Ps$ to contain samples from all classes to measure the loss in recall for different $\Q_i$. Figure~\ref{fig:modes_exp} (right) curves successfully demonstrate the sensitivity of recall to mode dropping.

\vspace{-1.0mm} %
\subsection{Assessing class imbalances for GANs}
\vspace{-1.5mm} %
\label{sec:mnistinceptionscore}

  \begin{figure}[t]
    \centering
    \begin{center}
      \setlength{\tabcolsep}{2pt}
      \begin{tabular}{cc}
        \begin{tabular}{c}
          \includegraphics[trim={0 112 0 0},clip,width=0.28\linewidth]{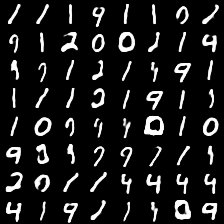} \\[-1mm]
          \includegraphics[trim={0 0 0 0},clip,width=0.28\linewidth]{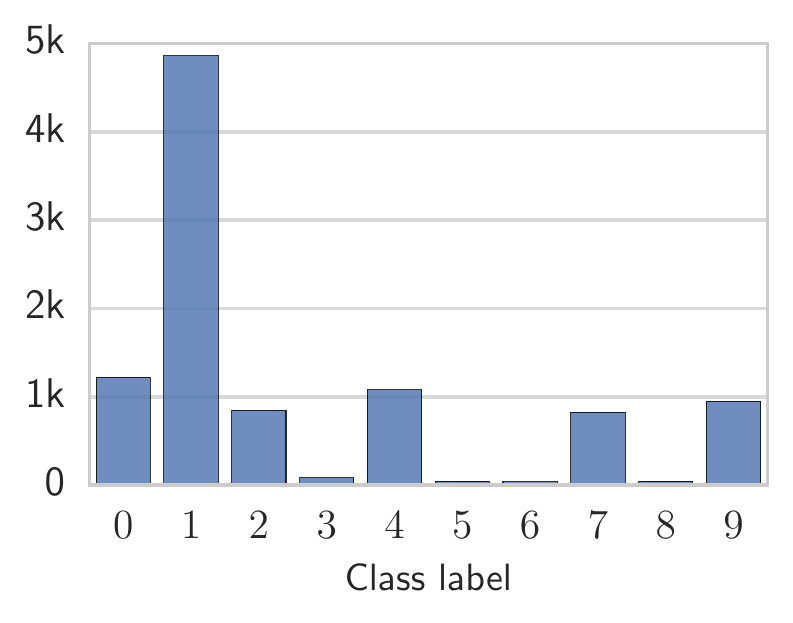}
        \end{tabular}
        \begin{tabular}{c}
          \vspace{3mm}
          \includegraphics[width=0.39\linewidth]{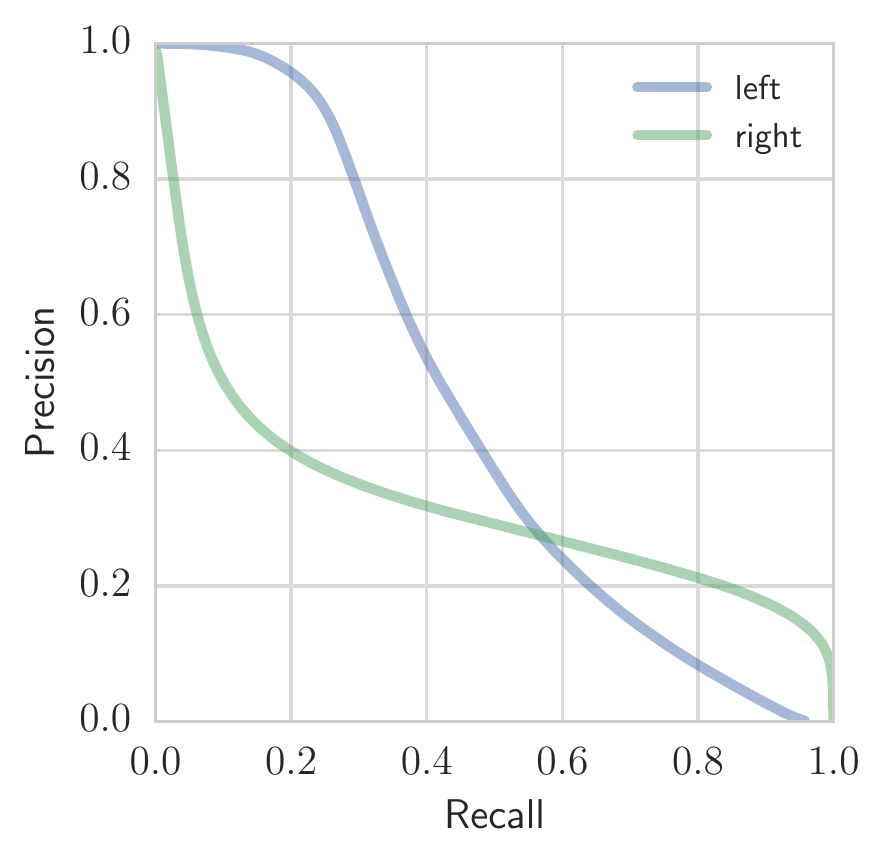}
        \end{tabular}
        \begin{tabular}{c}
          \includegraphics[trim={0 112 0 0},clip,width=0.28\linewidth]{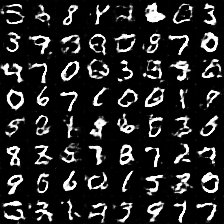} \\[-1mm]
          \includegraphics[trim={0 0 0 0},clip,width=0.28\linewidth]{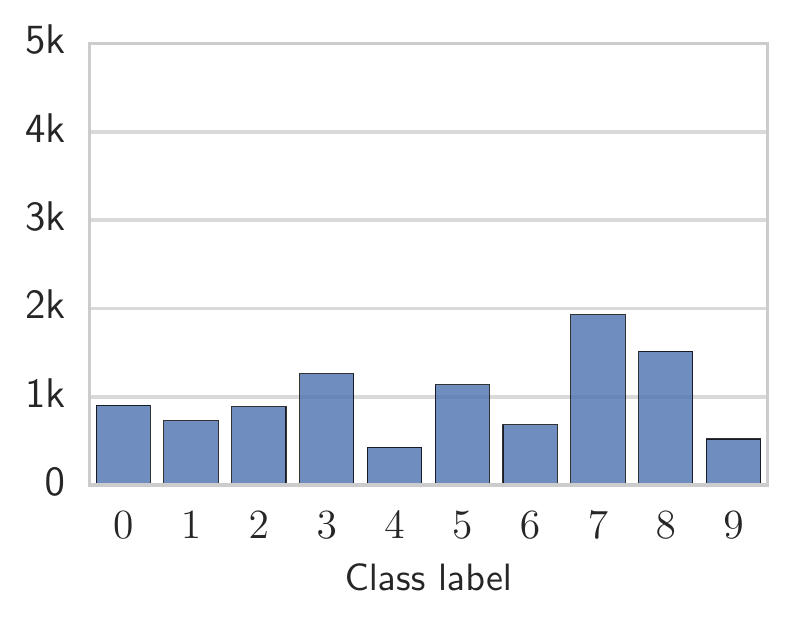}
        \end{tabular}
      \end{tabular}
    \end{center}
    \hfill
    \vspace{-6mm}
    \caption{Comparing two GANs trained on MNIST which both achieve an FID of 49. The model on the left seems to produce high-quality samples of only a subset of digits. On the other hand, the model on the right generates low-quality samples of all digits. The histograms showing the corresponding class distributions based on a trained MNIST classifier confirm this observation. At the same time, the classifier is more confident which indicates different levels of precision (96.7\% for the model on the left compared to 88.6\% for the model on the right).
    Finally, we note that the proposed \coolname{} algorithm does not require
    labeled data, as opposed to the IS which further needs a classifier that was trained on the respective data set.}
    \vspace{-2mm}
    \label{fig:mnist_classifier}
  \end{figure}

  In this section we analyze the effect of class imbalance on the \coolname{} curves.
  Figure~\ref{fig:mnist_classifier} shows a pair of GANs trained on MNIST which
  have virtually the same FID, but very different \coolname{} curves.
  The model on the left generates a subset of the digits of high quality, while the model on the right seems to generate all digits, but each has low quality. We can naturally interpret this difference via the \coolname{} curves: For a desired recall level of less than ${\sim}0.6$, the model on the left enjoys higher precision -- it generates several digits of high quality. If, however, one desires a recall higher than ${\sim}0.6$, the model on the right enjoys higher precision as it covers all digits.
  To confirm this, we train an MNIST classifier on the embedding of $\Ps$ with the ground truth labels and plot the distribution of the predicted classes for both models. The histograms clearly show that the model on the left failed to generate all classes (loss in recall), while the model on the right is producing a more balanced distribution over all classes (high
  recall). At the same time, the classifier has an average
  \emph{confidence}\footnote{We denote the output of the classifier for its
  highest value at the softmax layer as confidence. The intuition is that
  higher values signify higher confidence of the model for the given label.} of
  96.7\% on the model on the left compared to 88.6\% on the model on the right,
  indicating that the sample quality of the former is higher. This aligns very
  well with the \coolname{} plots: samples on the left have high quality but are not diverse in contrast to the samples on the right which are diverse but have low quality.

  This analysis reveals a connection to IS which is based on the premise that the conditional label distribution $p(y|x)$ should have low entropy, while the marginal $p(y)=\int p(y|x = G(z))dz$ should have high
  entropy. To further analyze the relationship between the proposed approach and \coolname{} curves, we plot $p(y|x)$ against precision and $p(y)$
  against recall in Figure~\ref{fig:inception_score_scatter} in the
  appendix. The results over a large number of GANs and VAEs show a large
  Spearman correlation of -0.83 for precision and 0.89 for recall. We however stress two key differences between the approaches: Firstly, to compute the quantities in IS one needs a classifier and a labeled data set in contrast to the proposed \coolname{} metric which can be applied on unlabeled data. Secondly, IS only captures losses in recall \wrt classes, while our metric measures more fine-grained recall losses (see Figure~\ref{fig:mnist_tilted} in the appendix).

\vspace{-1.0mm} %
\subsection{Application to GANs and VAEs}
\vspace{-1.5mm} %
\label{sec:applicationtogans}

  \tikzset{
      image label/.style={
          every node/.style={
              fill=white,
              text=black,
              font=\fontfamily{phv}\selectfont\scriptsize\bfseries,
              anchor=north west,
              xshift=0mm,
              yshift=0mm,
              at={(0,1)}
          }
      }
  }

  \begin{figure}[t]
    \begin{center}
      \setlength{\tabcolsep}{2pt}
      \begin{tabular}{cc}
        \begin{tabular}{cc}
          \includegraphics[width=0.485\linewidth]{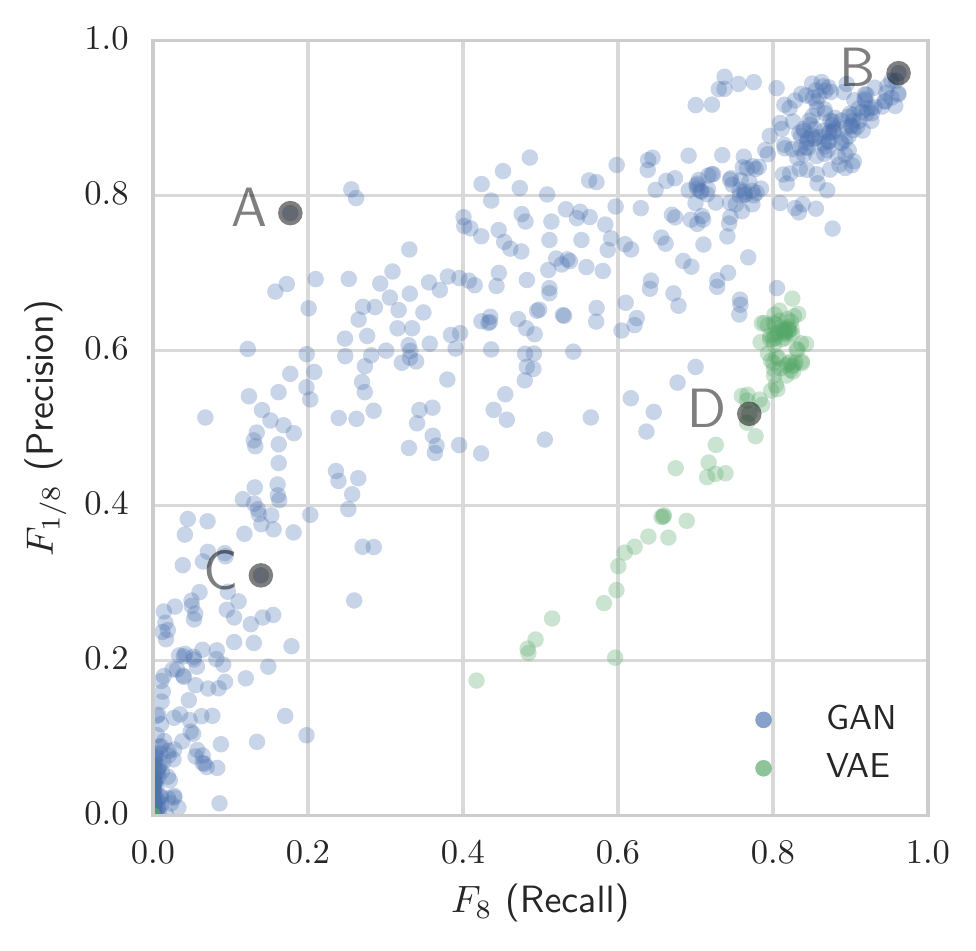}
        \end{tabular}
        \begin{tabular}{cc}
          \begin{tikzonimage}[width=0.21\textwidth]{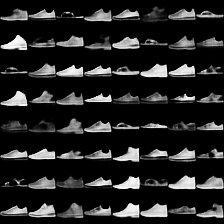}[image label]
              \node{A};
          \end{tikzonimage} &
          \begin{tikzonimage}[width=0.21\textwidth]{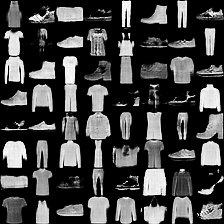}[image label]
              \node{B};
          \end{tikzonimage} \\
          \begin{tikzonimage}[width=0.21\textwidth]{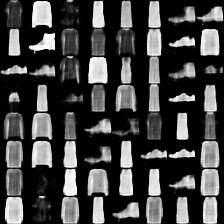}[image label]
              \node{C};
          \end{tikzonimage} &
          \begin{tikzonimage}[width=0.21\textwidth]{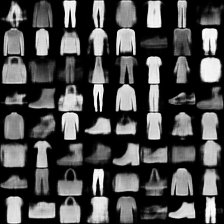}[image label]
              \node{D};
          \end{tikzonimage} \\
        \end{tabular}
      \end{tabular}
    \end{center}
    \hfill
    \vspace{-3mm}
    \caption{$F_{1/8}$ vs $F_8$ scores for a large number of GANs and VAEs on the Fashion-MNIST data set. For each model, we plot the maximum $F_{1/8}$ and $F_8$ scores to show the trade-off between precision and recall. VAEs generally achieve lower precision and/or higher recall than GANs which matches the folklore that VAEs often produce samples of lower quality while being less prone to mode collapse. On the right we show samples from four models which correspond to various success/failure modes: (A) high precision, low recall, (B) high precision, high recall, (C) low precision, low recall, and (D) low precision, high recall.}
    \label{fig:all_gans_2d}
  \end{figure}

  We evaluate the precision and recall of 7 GAN types and the VAE with 100 hyperparameter settings each as provided by~\cite{lucic2017gans}. In order to visualize this vast quantity of models, one needs to summarize the \coolname{} curves. A natural idea is to compute the maximum $F_1$ score, which corresponds to the harmonic mean between precision and recall as a single-number summary. This idea is fundamentally flawed as $F_1$ is symmetric. However, its generalization,
  defined as $F_\beta=(1+\beta^2)\frac{p\cdot r}{(\beta^2p)+r}$, provides a
  way to quantify the relative importance of precision and recall: $\beta > 1$ weighs recall higher than precision, whereas $\beta < 1$ weighs precision higher than recall. As a result, we propose to distill each \coolname{} curve into a pair of values: $F_\beta$ and $F_{1/\beta}$.

  Figure~\ref{fig:all_gans_2d} compares the maximum $F_8$ with the maximum $F_{1/8}$ for
  these models on the Fashion-MNIST data set. We choose $\beta=8$ as it offers a good insight into the bias towards precision versus recall. Since $F_8$
  weighs recall higher than precision and $F_{1/8}$ does the opposite, models
  with higher recall than precision will lie below the diagonal $F_8=F_{1/8}$
  and models with higher precision than recall will lie above. To our knowledge, this is the first metric which confirms the folklore that VAEs are biased towards higher recall, but may suffer from precision issues (\eg, due to blurring effects), at least on this data set. On the right, we show samples from four models on the extreme ends of the plot for all combinations of high and low precision and recall. We have included similar plots on the MNIST, CIFAR-10 and CelebA data sets in the appendix.

\vspace{-1.0mm} %
\section{Conclusion}\label{sec:conclusion}
\vspace{-1.5mm} %

Quantitatively evaluating generative models is a challenging task of paramount importance. %
In this work we show that one-dimensional scores are not sufficient to capture different failure cases of current state-of-the-art generative models. As an alternative, we propose a novel notion of precision and recall for distributions and prove that both notions are theoretically sound and have desirable properties. We then connect these notions to total variation distance as well as FID and IS and we develop an efficient algorithm that can be readily applied to evaluate deep generative models based on samples. We investigate the properties of the proposed algorithm on real-world data sets, including image and text generation, and show that it captures the precision and recall of generative models. Finally, we find empirical evidence supporting the folklore that VAEs produce samples of lower quality, while being less prone to mode collapse than GANs.

\clearpage
{
\small
\bibliographystyle{plainnat}
\bibliography{paper.bib}
}

\clearpage
\appendix

\section{Proofs}

  We first show the following auxiliary result and then prove
  Theorems~\ref{thm:properties} and~\ref{thm:redefinition}.
  \begin{lemma}
    \label{lem:inequality}
    Let $\P$ and $\Q$ be probability distributions defined on a finite state
    space $\Omega$. Let $\alpha\in(0,1]$ and $\beta\in(0,1]$. Then, $(\alpha,
    \beta)\in\PR$ if and only if there exists a distribution $\RR$ such that for
    all $\omega\in\Omega$
    \begin{equation}
      \label{eqn:prinequality}
      \P(\omega) \geq \beta \RR(\omega)\quad\text{and}\quad \Q(\omega)\geq \alpha\RR(\omega).
    \end{equation}
  \end{lemma}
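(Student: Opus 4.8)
The plan is to prove the two directions of the equivalence directly from Definition~\ref{def:precisionrecall}, treating the boundary cases $\alpha=1$ and $\beta=1$ separately, since that is the only place where the argument is not completely mechanical. Conceptually the lemma says that the mixture components $\mP$ and $\mQ$ in Definition~\ref{def:precisionrecall} carry no information beyond the pointwise domination constraints they impose on $\RR$, so one can freely reconstruct them.

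For the forward direction, suppose $(\alpha,\beta)\in\PR$, so by Definition~\ref{def:precisionrecall} there are distributions $\RR$, $\mP$, $\mQ$ with $\P = \beta\RR + (1-\beta)\mP$ and $\Q = \alpha\RR + (1-\alpha)\mQ$. Since $\mP$ and $\mQ$ are nonnegative and $1-\beta,\,1-\alpha \ge 0$, dropping the nonnegative summand in each equation gives $\P(\omega) \ge \beta\RR(\omega)$ and $\Q(\omega)\ge\alpha\RR(\omega)$ for every $\omega\in\Omega$, which is exactly \eqref{eqn:prinequality} with the same witness $\RR$. This direction needs no case distinction.

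For the converse, assume a distribution $\RR$ satisfies $\P(\omega)\ge\beta\RR(\omega)$ and $\Q(\omega)\ge\alpha\RR(\omega)$ for all $\omega$, and solve for $\mP$ and $\mQ$ in the mixture equations. If $\beta<1$, set $\mP(\omega) = \tfrac{\P(\omega)-\beta\RR(\omega)}{1-\beta}$; this is nonnegative by hypothesis and sums to $\tfrac{1-\beta}{1-\beta}=1$, hence is a valid distribution, and by construction $\P = \beta\RR + (1-\beta)\mP$. If instead $\beta=1$, the inequality $\P(\omega)\ge\RR(\omega)$ together with $\sum_{\omega}\P(\omega)=\sum_{\omega}\RR(\omega)=1$ forces $\P=\RR$, so $\P = \beta\RR + (1-\beta)\mP$ holds with $\mP$ chosen to be any distribution on $\Omega$. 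The symmetric construction, with $\alpha$ in place of $\beta$ and $\Q$ in place of $\P$, produces $\mQ$; then $\RR$, $\mP$, $\mQ$ witness $(\alpha,\beta)\in\PR$ via Definition~\ref{def:precisionrecall}.

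The only real subtlety — the ``main obstacle,'' such as it is — is the boundary behavior at $\alpha=1$ or $\beta=1$, where the naive formula for $\mP$ or $\mQ$ divides by zero; the resolution is the elementary observation that a pointwise inequality between two probability vectors with equal total mass must in fact be an equality, so the corresponding mixture component can be chosen arbitrarily. Everything else is routine bookkeeping, and the argument never uses finiteness of $\Omega$ beyond ensuring the sums are well defined.
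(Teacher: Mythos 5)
Your proof is correct and follows essentially the same route as the paper's: drop the nonnegative mixture terms for the forward direction, and reconstruct $\mP$ and $\mQ$ via the explicit formulas $\mP(\omega)=\frac{\P(\omega)-\beta\RR(\omega)}{1-\beta}$ and $\mQ(\omega)=\frac{\Q(\omega)-\alpha\RR(\omega)}{1-\alpha}$ for the converse. You are in fact slightly more careful than the paper, which states these formulas without addressing the division by zero when $\alpha=1$ or $\beta=1$; your observation that the pointwise inequality between probability vectors of equal total mass forces $\P=\RR$ (resp.\ $\Q=\RR$) in that case, so the corresponding mixture component may be chosen arbitrarily, cleanly closes that small gap.
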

  \begin{proof}
    If $(\alpha, \beta)\in\PR$, then~\eqref{eqn:prdefinition} and the
    non-negativity of $\mP$ and $\mQ$ directly imply~\eqref{eqn:prinequality}
    for the same choice of $\RR$. Conversely, if~\eqref{eqn:prinequality} holds
    for a distribution $\mu$, we may define the distributions
    \begin{equation*}
      \mP(\omega) = \frac{\P(\omega)-\beta\RR(\omega)}{1-\beta}
      \quad\text{and}\quad
      \mQ(\omega) = \frac{\Q(\omega)-\alpha\RR(\omega)}{1-\alpha}.
    \end{equation*}
    By definition $\alpha$, $\beta$, $\RR$, $\mP$ and $\mQ$
    satisfy~\eqref{eqn:prdefinition} in Definition~\ref{def:precisionrecall}
    which implies $(\alpha, \beta)\in\PR$.
  \end{proof}

\subsection{Proof of Theorem~\ref{thm:properties}}
\label{sec:app:proof:properties}

  \begin{proof}
    We show each of the properties independently:

    \emph{\ref{prop:equality} Equality}:
      If $(1,1) \in \PR$, then we have by Definition~\ref{def:precisionrecall}
      that $\P=\RR$ and $\Q=\RR$ which implies $\P=\Q$ as claimed. Conversely,
      if $\P=\Q$, Definition~\ref{def:precisionrecall} is satisfied for
      $\alpha=\beta=1$ by choosing $\RR=\mP=\mQ=\P$. Hence, $(1,1) \in \PR$
      as claimed.

    \emph{\ref{prop:disjoint} Disjoint support}:
      We show both directions of the claim by contraposition, \ie, we show
      $\supp(\P) \cap\supp(\Q) \neq \emptyset \;\Leftrightarrow\; \PR \supset
      \lbrace(0,0)\rbrace$. Consider an arbitrary $\omega\in\supp(\P)
      \cap\supp(\Q)$. Then, by definition we have $\P(\omega) > 0$ and
      $\Q(\omega)>0$. Let $\RR$ be defined as the distribution with
      $\RR(\omega)=1$ and $\RR(\omega') = 0$ for all
      $\omega'\in\Omega\setminus\lbrace\omega\rbrace$. Clearly, it holds that
      $\P(\omega)\geq \P(\omega)\RR(\omega)$ and $\Q(\omega)\geq
      \Q(\omega)\RR(\omega)$ for all $\omega\in\Omega$. Hence, by
      Lemma~\ref{lem:inequality}, we have  $\left(\Q(\omega),
      \P(\omega)\right)\in\PR$ which implies that $\PR \supset
      \lbrace(0,0)\rbrace$ as claimed. Conversely, $\PR \supset
      \lbrace(0,0)\rbrace$ implies by Lemma~\ref{lem:inequality} that there
      exist $\alpha\in(0,1]$ and $\beta\in(0,1]$ as well as a distribution $\RR$
      satisfying~\eqref{eqn:prinequality}. Let $\omega\in\supp(\RR)$ which
      implies $\RR(\omega)>0$ and thus by~\eqref{eqn:prinequality} also
      $\P(\omega) > 0$ and $\Q(\omega)>0$. Hence, $\omega$ is both in the
      support of $\P$ and $\Q$ which implies $\supp(\P) \cap\supp(\Q) \neq
      \emptyset$ as claimed.

    \emph{\ref{prop:maxprecision} Maximum precision}:
      If $(\alpha,\beta)\in \PR$, then by Lemma~\ref{lem:inequality} there
      exists a distribution $\RR$ such that for all $\omega\in\Omega$ we have
      $\P(\omega) \geq \beta \RR(\omega)$ and $\Q(\omega)\geq
      \alpha\RR(\omega)$. $\P(\omega) \geq \beta \RR(\omega)$  implies
      $\supp(\RR)\subseteq\supp(\P)$ and hence
      $\sum_{\omega\in\supp(\P)}\RR(\omega)=1$. Together with $\Q(\omega)\geq
      \alpha\RR(\omega)$, this yields
      $\Q(\supp(\P))=\sum_{\omega\in\supp(\P)}\Q(\omega) \geq
      \alpha\sum_{\omega\in\supp(\P)}\RR(\omega) = \alpha$ which implies
      $\alpha\leq \Q(\supp(\P))$ for all $(\alpha,\beta) \in \PR$.

      To prove the claim, we next show that there exists $(\alpha,\beta) \in
      \PR$ with $\alpha= \Q(\supp(\P))$. Let $\iPQ=\supp(\P)\cap\supp(\Q)$. If
      $\iPQ=\emptyset$, then $\alpha=\Q(\supp(\P))=0$ and $(0,0)\in\PR$ by
      Definition~\ref{def:precisionrecallset} as claimed. For the case
      $\iPQ\neq\emptyset$, let
      $\beta=\min_{\omega\in\iPQ}\frac{\P(\omega)\Q(\iPQ)}{\Q(\omega)}$. By
      definition of $\iPQ$, we have $\beta>0$. Furthermore, $\beta \leq
      \P(\iPQ)\leq 1$ since $\frac{\P(\omega)}{\P(\iPQ)}\leq
      \frac{\Q(\omega)}{\Q(\iPQ)}$ for at least one $\omega\in\iPQ$. Consider
      the distribution $\RR$ where $\RR(\omega)= \frac{\Q(\omega)}{\Q(\iPQ)}$
      for all $\omega\in\iPQ$ and $\RR(\omega)=0$ for
      $\omega\in\Omega\setminus\iPQ$. By construction, $\RR$
      satisfies~\eqref{eqn:prinequality} in Lemma~\ref{lem:inequality} and hence
      $(\alpha,\beta)\in\PR$ as claimed.

    \emph{\ref{prop:maxrecall} Maximum recall}:
      This follows directly from applying Property~\ref{prop:duality} to
      Property~\ref{prop:maxprecision}.

    \emph{\ref{prop:monotonicity} Monotonicity}:
      If $(\alpha,\beta)\in \PR$, then by Lemma~\ref{lem:inequality} there
      exists a distribution $\RR$ such that for all $\omega\in\Omega$ we have
      that $\P(\omega) \geq \beta \RR(\omega)$ and $\Q(\omega)\geq
      \alpha\RR(\omega)$. For $\alpha'\in(0,\alpha]$ and $\beta'\in(0,\beta]$,
      it follows that $\P(\omega) \geq \beta' \RR(\omega)$ and $\Q(\omega)\geq
      \alpha'\RR(\omega)$ for all $\omega\in\Omega$. By
      Lemma~\ref{lem:inequality} this implies $(\alpha',\beta')\in \PR$ as
      claimed.

    \emph{\ref{prop:duality} Duality}:
      This follows directly from switching $\alpha$ with $\beta$, $\P$ with $\Q$
      and $\mP$ with $\mQ$ in Definition~\ref{def:precisionrecall}.

  \end{proof}

\newpage
\subsection{Proof of Theorem~\ref{thm:redefinition}}
\label{sec:app:proof:redefinition}

  \begin{proof}
    We first show that $\PR\subseteq\left\lbrace\left(\theta\alpha(\lambda),
    \theta\beta(\lambda)\right)\mid\lambda\in(0,\infty),
    \theta\in[0,1]\right\rbrace$. We consider an arbitrary element $(\alpha',
    \beta')\in\PR$ and show that $(\alpha', \beta') =
    \left(\theta\alpha(\lambda), \theta\beta(\lambda)\right)$ for some
    $\lambda\in(0,\infty)$ and $\theta\in[0,1]$. For the case $(\alpha',
    \beta')=(0,0)$, the result holds trivially for the choice of $\lambda=1$ and
    $\theta=0$. For the case $(\alpha', \beta')\neq(0,0)$, we choose
    $\lambda=\frac{\alpha'}{\beta'}$ and $\theta=\frac{\beta'}{\beta(\lambda)}$.
    Since $\alpha(\lambda)=\lambda\beta(\lambda)$ by definition, this implies
    $(\alpha', \beta') = \left(\theta\alpha(\lambda),
    \theta\beta(\lambda)\right)$ as required. Furthermore,
    $\lambda\in(0,\infty)$ since by Definitions~\ref{def:precisionrecall}
    and~\ref{def:precisionrecallset} $\alpha'>0$ if and only if $\beta'>0$.
    Similarly, we show that $\theta\in[0,1]$: By Lemma~\ref{lem:inequality}
    there exists a distribution $\RR$ such that $\beta'
    \RR(\omega)\leq\P(\omega)$ and $\alpha'\RR(\omega)\leq\Q(\omega)$ for all
    $\omega\in\Omega$. This implies that
    $\beta'\RR(\omega)\leq\frac{\Q(\omega)}{\lambda}$ and thus
    $\beta'\RR(\omega)\leq\min\left(\P(\omega),\frac{\Q(\omega)}{\lambda}\right)$
    for all $\omega\in\Omega$. Summing over all $\omega\in\Omega$, we obtain
    $\beta' \leq
    \sum_{\omega\in\Omega}\min\left(\P(\omega),\frac{\Q(\omega)}{\lambda}\right)=\beta(\lambda)$
    which implies $\theta\in[0,1]$.

    Finally, we show that $\PR\supseteq\left\lbrace\left(\theta\alpha(\lambda),
    \theta\beta(\lambda)\right)\mid\lambda\in(0,\infty),
    \theta\in[0,1]\right\rbrace$. Consider arbitrary $\lambda\in(0,\infty)$ and
    $\theta\in[0,1]$. If $\beta(\lambda)=0$, the claim holds trivially since
    $(0,0)\in\PR$. Otherwise, define the distribution $\RR$ by $
    \RR(\omega) =
    \min\left(\P(\omega),\frac{\Q(\omega)}{\lambda}\right)/\beta(\lambda)$ for
    all $\omega\in\Omega$. By definition, $\beta(\lambda)\RR(\omega) \leq
    \min\left(\P(\omega),\frac{\Q(\omega)}{\lambda}\right)\leq\P(\omega)$ for
    all $\omega\in\Omega$. Similarly, $\alpha(\lambda)\RR(\omega) \leq
    \min\left(\lambda\P(\omega),\Q(\omega)\right)\leq\Q(\omega)$ for all
    $\omega\in\Omega$ since $\alpha(\lambda)=\lambda\beta(\lambda)$. Because
    $\theta\in[0,1]$, this implies $\theta\beta(\lambda)\RR(\omega)
    \leq\P(\omega)$ and $\theta\alpha(\lambda)\RR(\omega) \leq\Q(\omega)$ for
    all $\omega\in\Omega$. Hence, by Lemma~\ref{lem:inequality},
    $\left(\theta\alpha(\lambda), \theta\beta(\lambda)\right)\in\PR$ for all
    $\lambda\in(0,\infty)$ and $\theta\in[0,1]$ as claimed.
  \end{proof}

\section{Further figures}

  \begin{figure}[h]
    \centering
    \begin{center}
      \setlength{\tabcolsep}{2pt}
      \begin{tabular}{cc}
        \begin{tabular}{c}
          \includegraphics[trim={0 112 0     0},clip,width=0.28\linewidth]{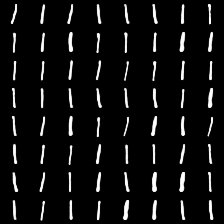} \\[-1mm]
          \includegraphics[trim={0 0 0 0},clip,width=0.28\linewidth]{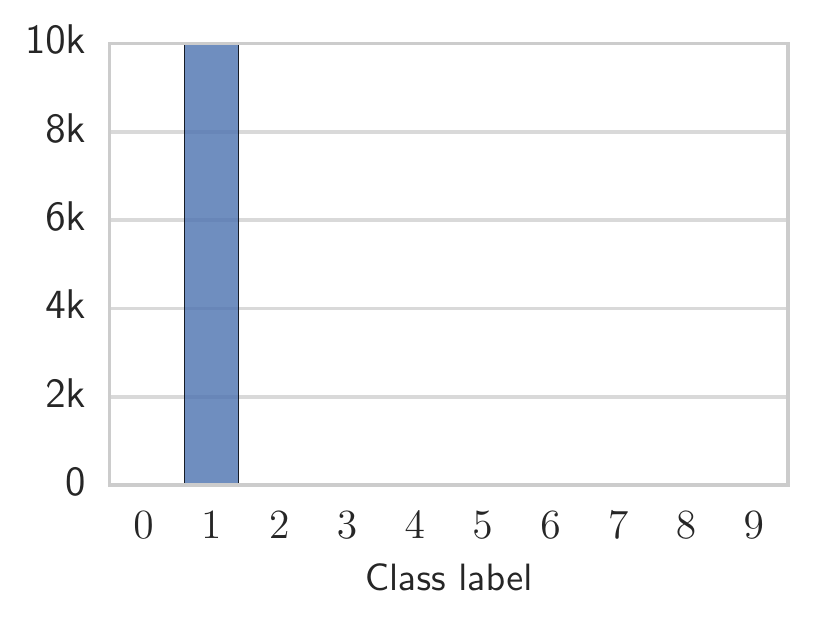}
        \end{tabular}
        \begin{tabular}{c}
          \vspace{3mm}
          \includegraphics[width=0.39\linewidth]{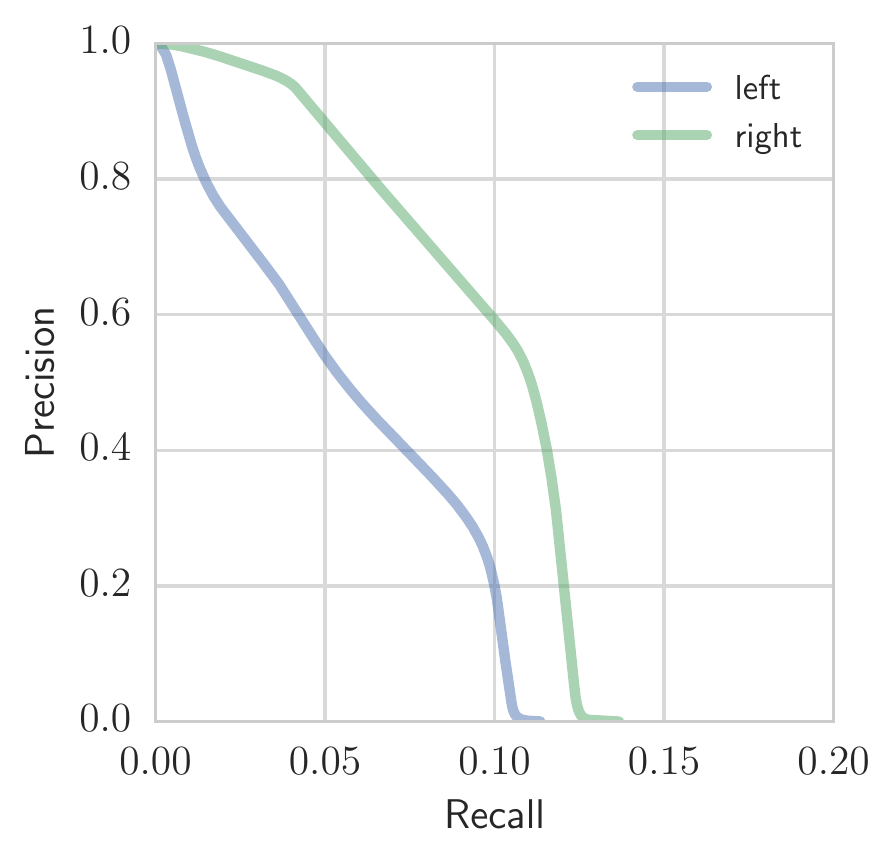}
        \end{tabular}
        \begin{tabular}{c}
          \includegraphics[trim={0 112 0 0},clip,width=0.28\linewidth]{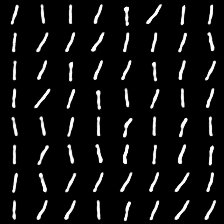} \\[-1mm]
          \includegraphics[trim={0 0 0 0},clip,width=0.28\linewidth]{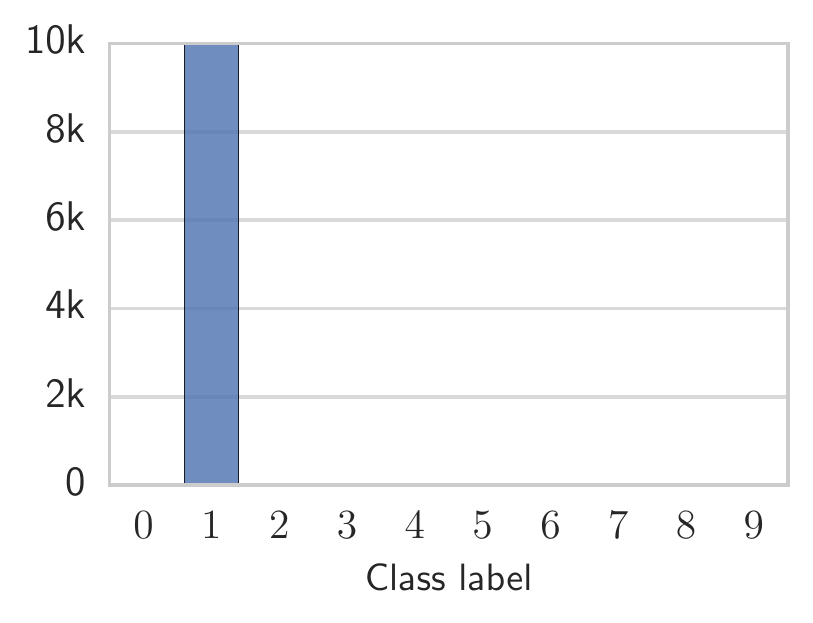}
        \end{tabular}
      \end{tabular}
    \end{center}
    \hfill
    \vspace{-3mm}
    \caption{Comparing a pair of GANs on MNIST which have both collapsed to
    producing 1's. An analysis with a trained classifier as in
    Section~\ref{sec:mnistinceptionscore} comes to the same conclusion for both
    models, namely, that they have collapsed to producing 1's only. However, the
    \coolname{} curve shows that the model on the right has a slightly higher recall.
    This is indeed correct: while the model on the left is producing straight
    1's only, the model on the right is producing some more varied shapes such
    as tilted 1's.}
    \label{fig:mnist_tilted}
  \end{figure}

\newpage

  \begin{figure}[t]
    \centering
    \setlength{\tabcolsep}{2pt}
    \begin{center}
    \begin{tabular}{cc}
      Real images & Generated images \\
      \includegraphics[trim={0 0 0 0},clip,width=0.48\linewidth]{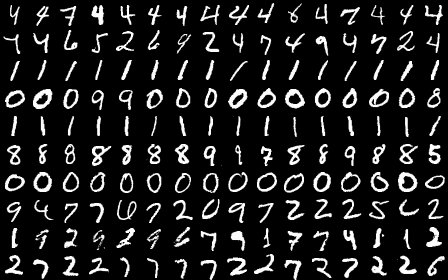} &
      \includegraphics[trim={0 0 0 0},clip,width=0.48\linewidth]{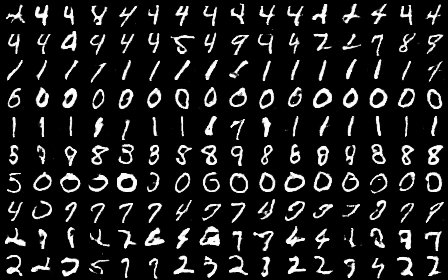} \\
      \includegraphics[trim={0 0 0 0},clip,width=0.48\linewidth]{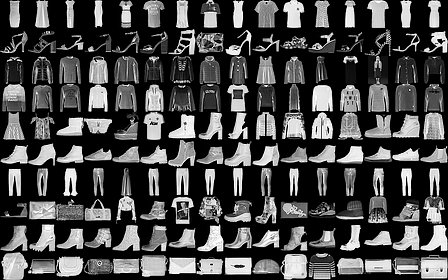} &
      \includegraphics[trim={0 0 0 0},clip,width=0.48\linewidth]{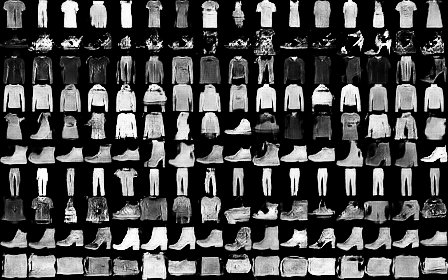} \\
      \includegraphics[trim={0 0 0 0},clip,width=0.48\linewidth]{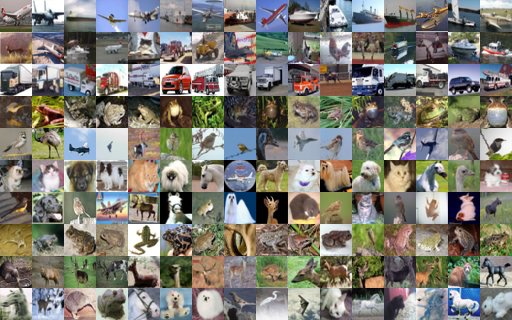} &
      \includegraphics[trim={0 0 0 0},clip,width=0.48\linewidth]{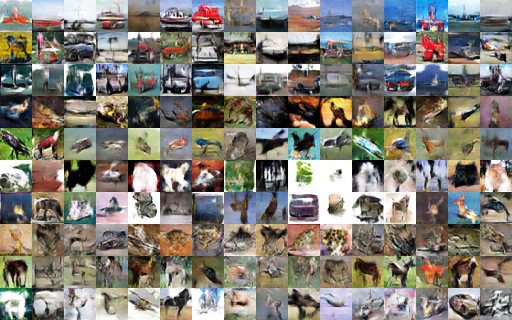} \\
      \includegraphics[trim={0 0 0 0},clip,width=0.48\linewidth]{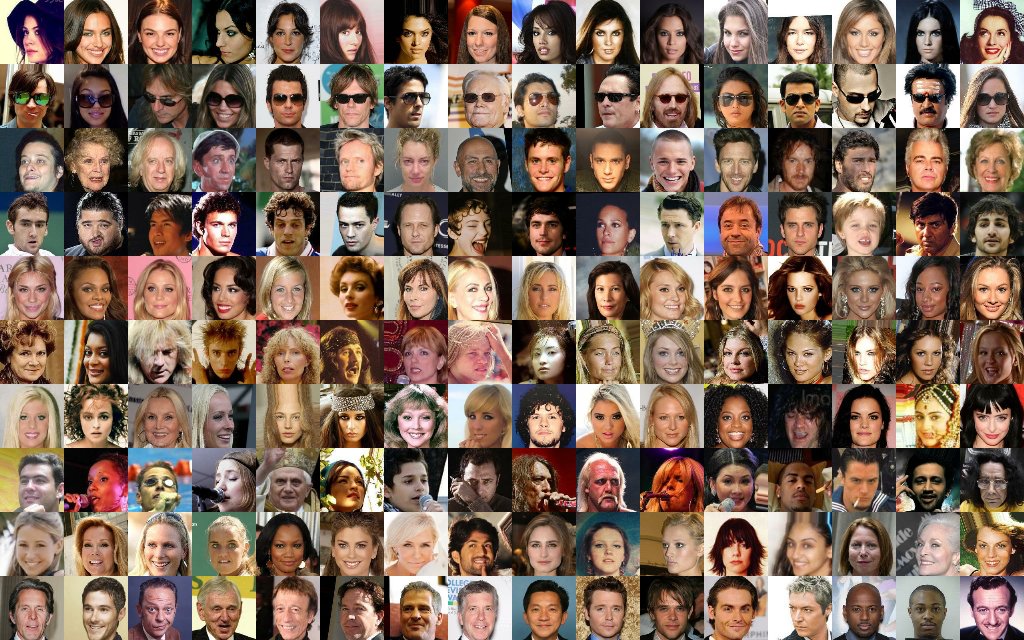} &
      \includegraphics[trim={0 0 0 0},clip,width=0.48\linewidth]{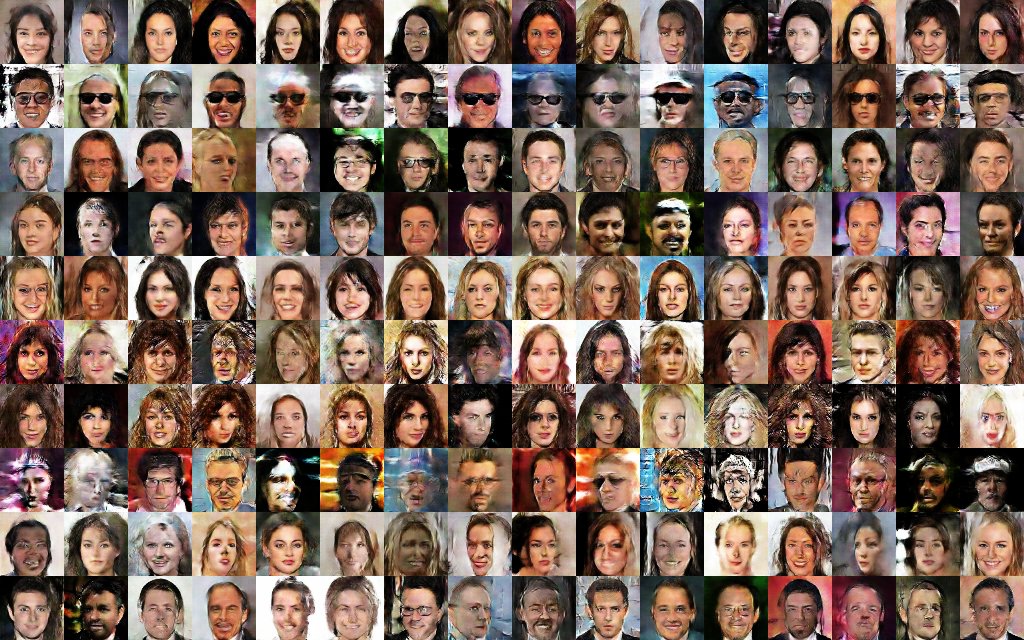} \\
    \end{tabular}
    \end{center}
    \vspace{-3mm}
    \caption{Clustering the real and generated samples from a GAN in feature
    space (10 cluster centers for visualization) yields the clusters above for
    the data sets MNIST, Fashion-MNIST, CIFAR-10 and CelebA. Although the GAN
    samples are not perfect, they are clustered in a meaningful way.}
    \label{fig:cluster_vis}
  \end{figure}

  \begin{figure}[t]
    \centering
    \includegraphics[width=0.45\linewidth]{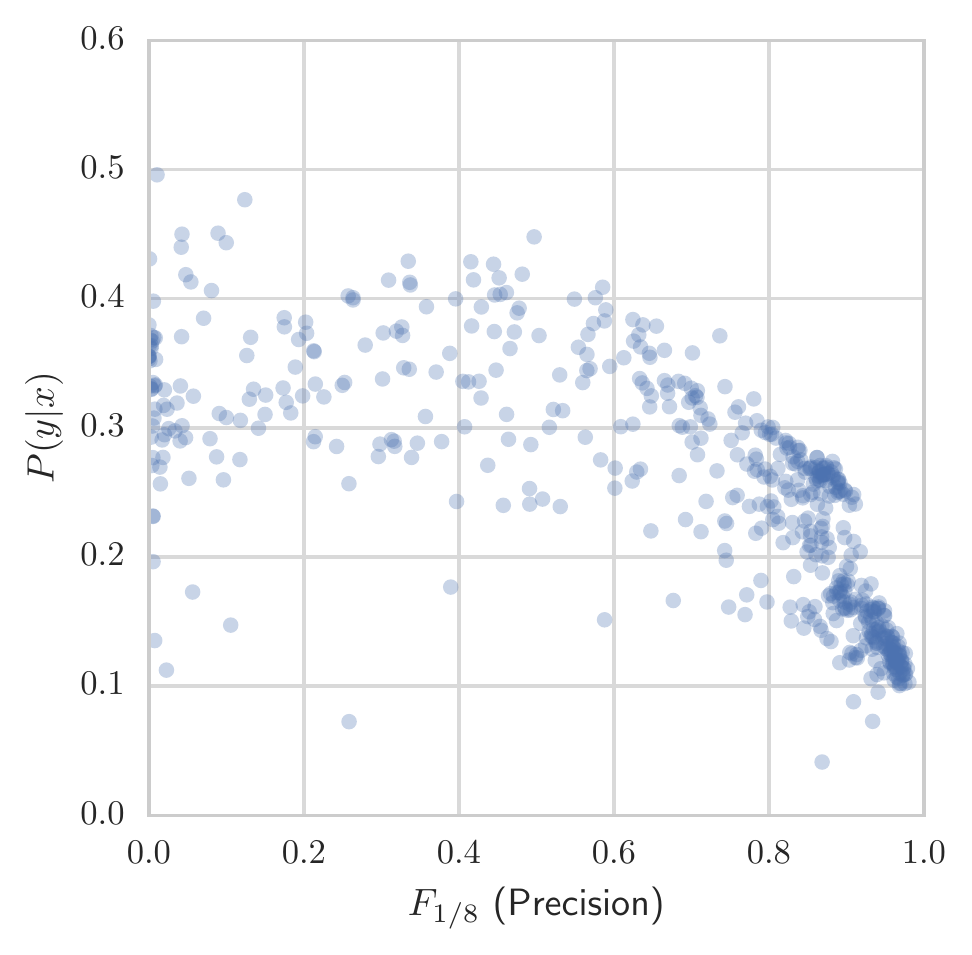}
    \includegraphics[width=0.45\linewidth]{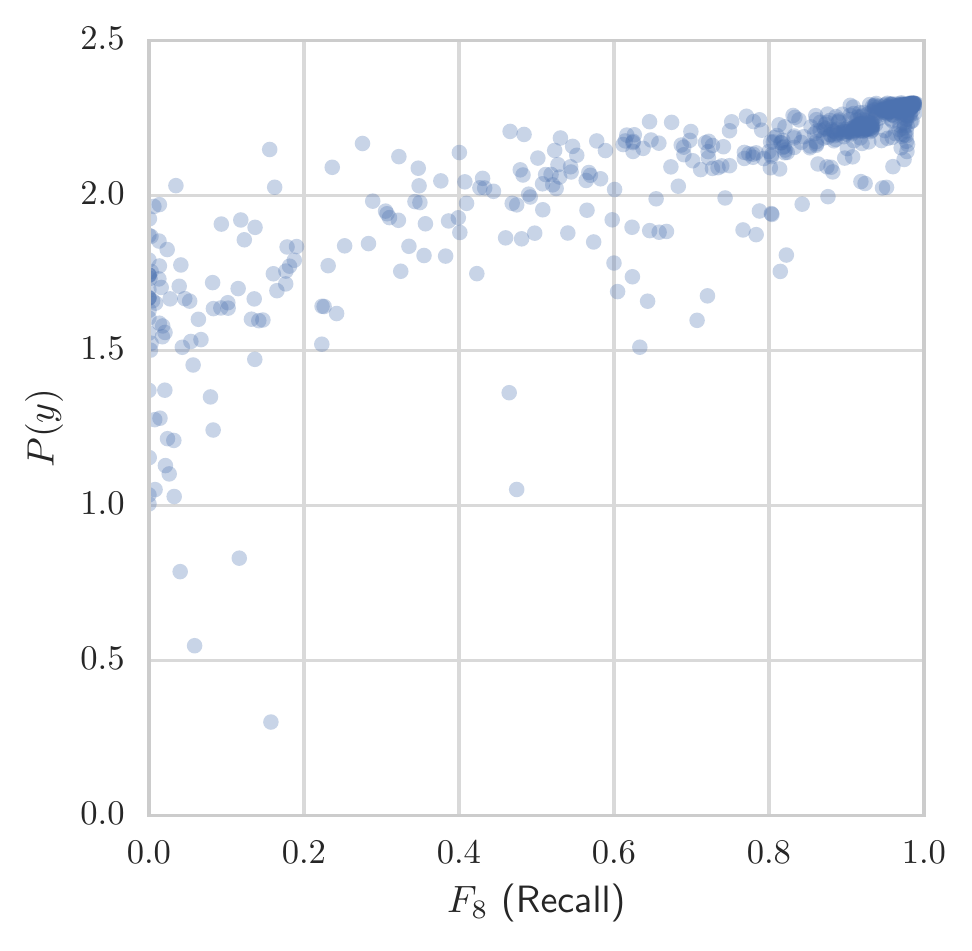}
    \caption{Comparing our unsupervised $F_{1/8}$ and $F_8$ measures with the
    supervised measures $P(y|x)$ and $P(y)$ similar to the IS (for a definition of $F_\beta$, see Section~\ref{sec:applicationtogans}).
    Each circle
    represents a trained generative model (GAN or VAE) on the MNIST data set.
    The values show a fairly high correlation with a Spearman rank correlation
    coefficient of -0.83 on the left and 0.89 on the right.}
    \label{fig:inception_score_scatter}
  \end{figure}

  \begin{figure}[t]
    \centering
    \includegraphics[width=0.378\linewidth]{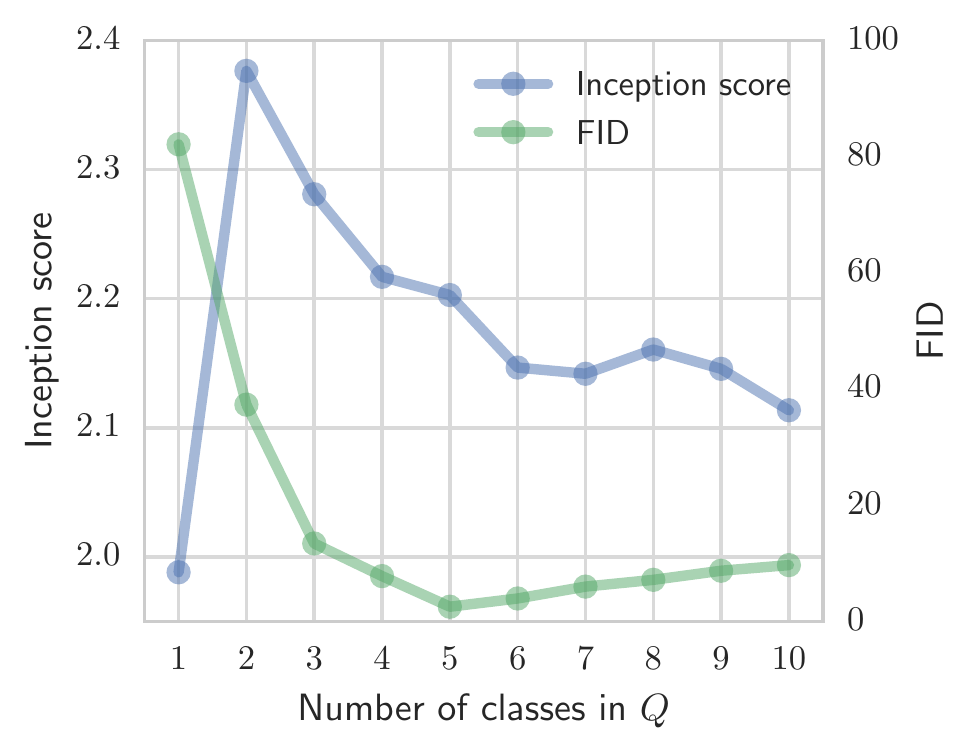}
    \includegraphics[width=0.3\linewidth]{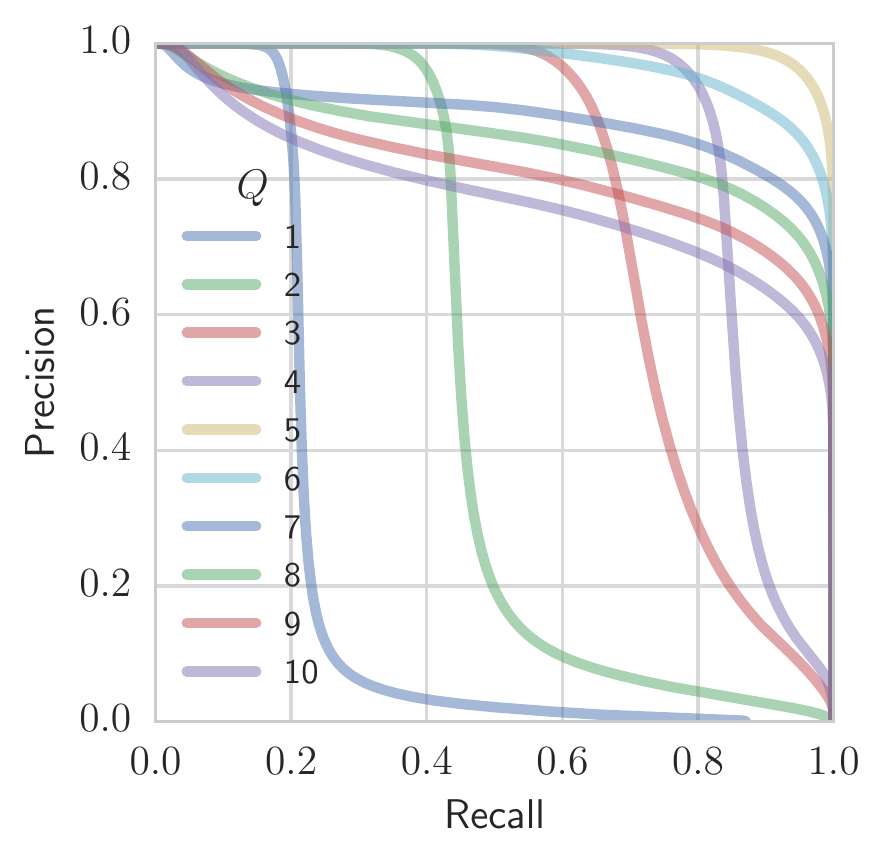}
    \includegraphics[width=0.378\linewidth]{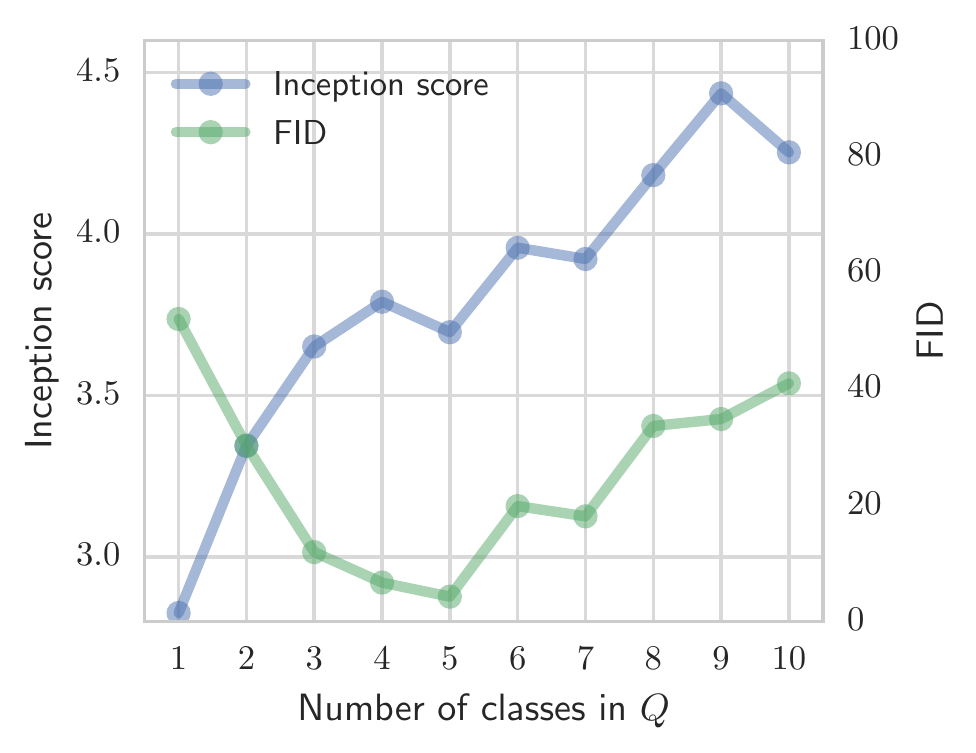}
    \includegraphics[width=0.3\linewidth]{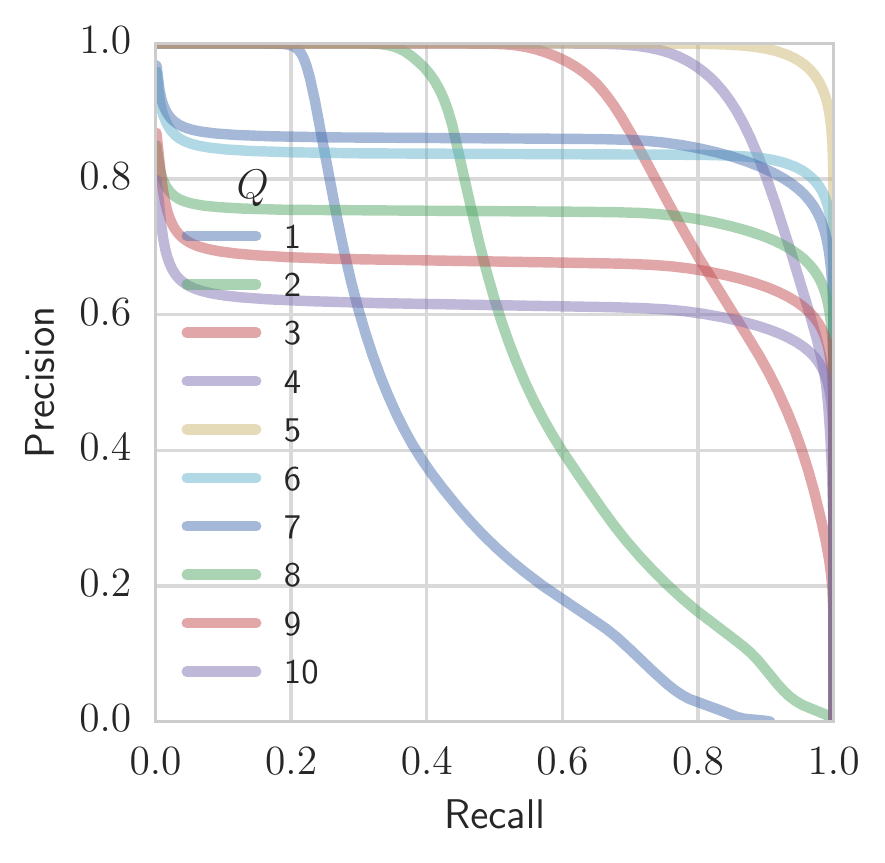}
    \caption{Corresponding plots as in Figure~\ref{fig:modes_exp} for the
    data sets MNIST (top) and Fashion-MNIST (bottom).}
    \label{fig:modes_exp_appendix}
  \end{figure}

  \begin{figure}[t]
    \begin{center}
      \setlength{\tabcolsep}{2pt}
      \begin{tabular}{cc}
        \begin{tabular}{cc}
          \includegraphics[width=0.485\linewidth]{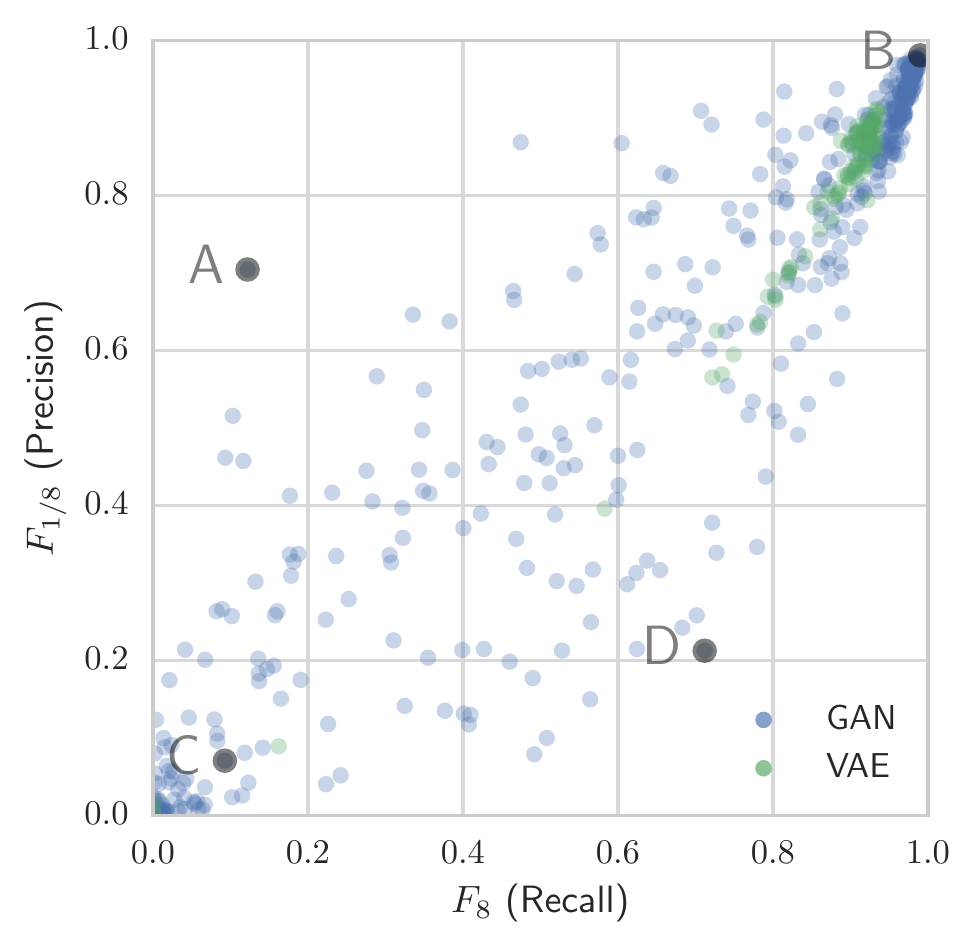}
        \end{tabular}
        \begin{tabular}{cc}
          \begin{tikzonimage}[width=0.21\textwidth]{samples_mnist_792.jpg}[image label]
              \node{A};
          \end{tikzonimage} &
          \begin{tikzonimage}[width=0.21\textwidth]{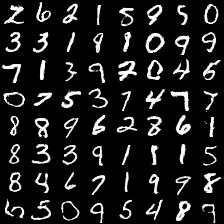}[image label]
              \node{B};
          \end{tikzonimage} \\
          \begin{tikzonimage}[width=0.21\textwidth]{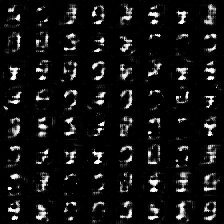}[image label]
              \node{C};
          \end{tikzonimage} &
          \begin{tikzonimage}[width=0.21\textwidth]{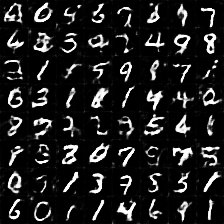}[image label]
              \node{D};
          \end{tikzonimage} \\
        \end{tabular}
      \end{tabular}
      \begin{tabular}{cc}
        \begin{tabular}{cc}
          \includegraphics[width=0.485\linewidth]{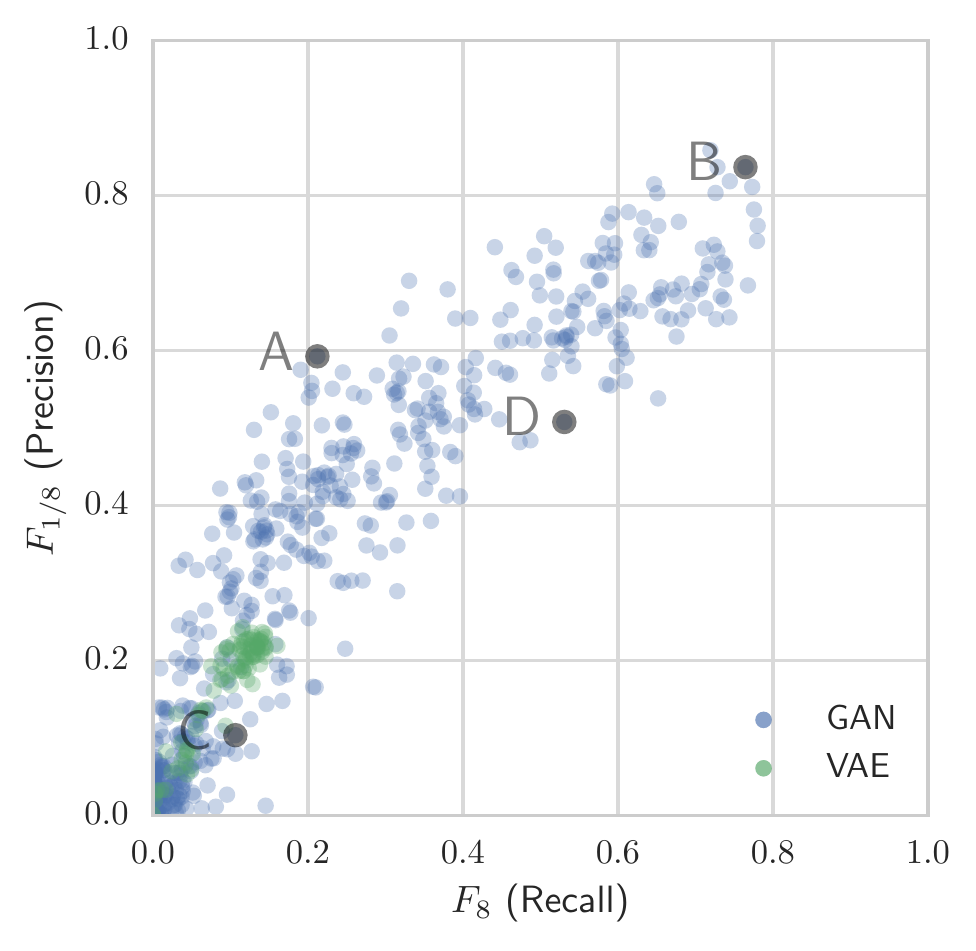}
        \end{tabular}
        \begin{tabular}{cc}
          \begin{tikzonimage}[width=0.21\textwidth]{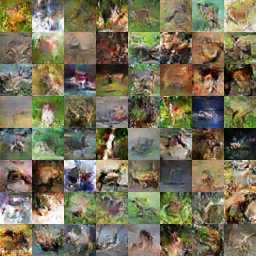}[image label]
              \node{A};
          \end{tikzonimage} &
          \begin{tikzonimage}[width=0.21\textwidth]{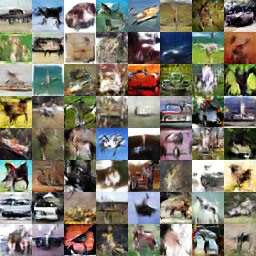}[image label]
              \node{B};
          \end{tikzonimage} \\
          \begin{tikzonimage}[width=0.21\textwidth]{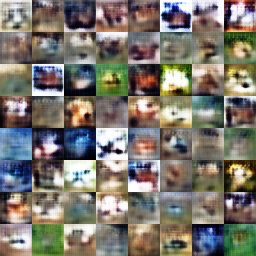}[image label]
              \node{C};
          \end{tikzonimage} &
          \begin{tikzonimage}[width=0.21\textwidth]{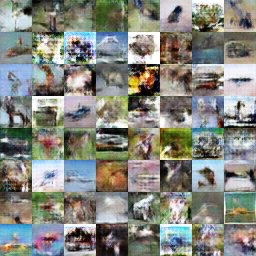}[image label]
              \node{D};
          \end{tikzonimage} \\
        \end{tabular}
      \end{tabular}
      \begin{tabular}{cc}
        \begin{tabular}{cc}
          \includegraphics[width=0.485\linewidth]{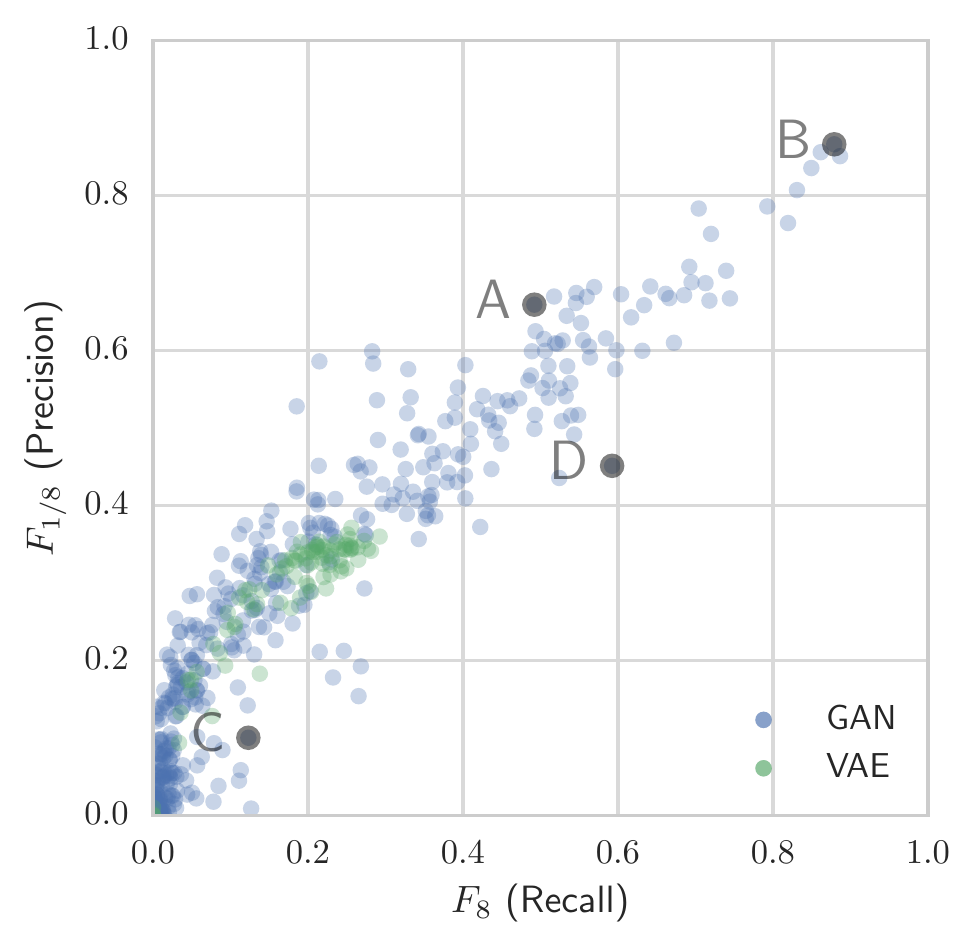}
        \end{tabular}
        \begin{tabular}{cc}
          \begin{tikzonimage}[width=0.21\textwidth]{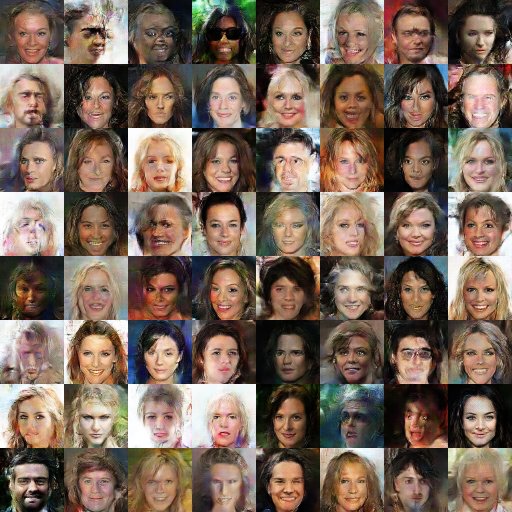}[image label]
              \node{A};
          \end{tikzonimage} &
          \begin{tikzonimage}[width=0.21\textwidth]{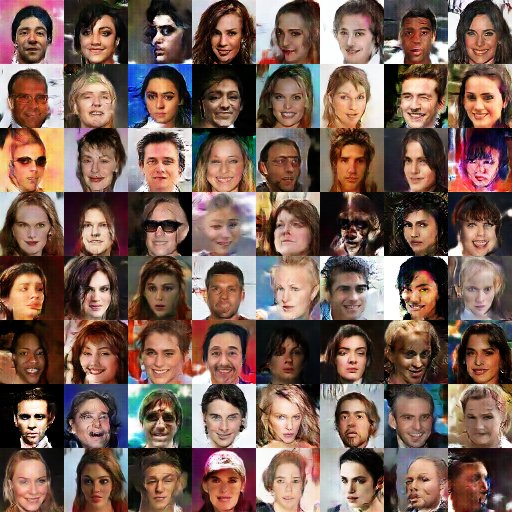}[image label]
              \node{B};
          \end{tikzonimage} \\
          \begin{tikzonimage}[width=0.21\textwidth]{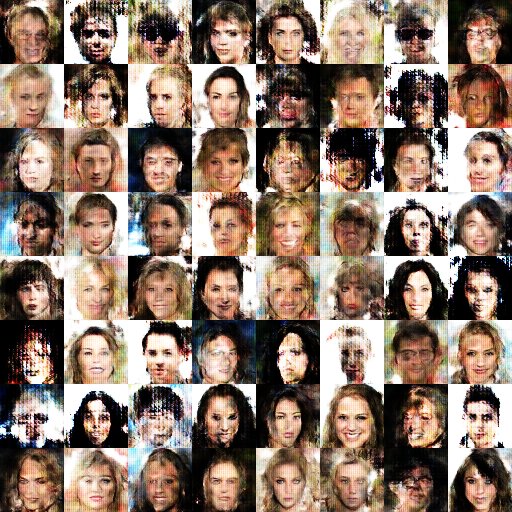}[image label]
              \node{C};
          \end{tikzonimage} &
          \begin{tikzonimage}[width=0.21\textwidth]{samples_celeba_305.jpg}[image label]
              \node{D};
          \end{tikzonimage} \\
        \end{tabular}
      \end{tabular}
    \end{center}
    \hfill
    \vspace{-3mm}
    \caption{Corresponding plots as in Figure~\ref{fig:all_gans_2d} for the
    data sets MNIST (top), CIFAR-10 (middle) and CelebA (bottom).}
    \label{fig:all_gans_2d_others}
  \end{figure}

\end{document}